
\documentclass[journal,10pt]{IEEEtran}

\usepackage{tabularx}
\usepackage{array}

\usepackage{amsmath,amsfonts,bm,bbm,enumitem}
\allowdisplaybreaks
\usepackage{color}
\usepackage{amsthm}
\usepackage{graphicx}
\usepackage{url}
\usepackage{algorithm,algorithmic,multirow,hhline}
\usepackage{dblfloatfix}
\usepackage{amssymb}
\usepackage{caption}
\usepackage{comment}
\usepackage{subfigure}
\usepackage{pstricks}
\usepackage{setspace}
\usepackage{color}
\usepackage{booktabs,multirow}
\usepackage{enumitem}
\usepackage{multicol}
\input{mysymbol.sty}

\usepackage{cite}

\def\ie{\emph{i.e.,\ }}

\def\sm{\small}
\def\nm{\normalsize}

\providecommand{\ip}[1]{\boldsymbol{\langle}#1\boldsymbol{\rangle}}

\def\lamb{\boldsymbol{\lambda}}

\makeatletter

\newcommand{\Rmnum}[1]{\expandafter\@slowromancap\romannumeral #1@}
\makeatother

\newtheorem{theorem}{Theorem}
\newtheorem{lemma}{Lemma}

\newtheorem{corollary}{Corollary}

\providecommand{\propositionname}{Proposition}


\hyphenation{op-tical net-works semi-conduc-tor}

\usepackage[T1]{fontenc}
\usepackage{etoolbox}


\title{Harnessing Wireless Channels for Scalable and Privacy-Preserving Federated Learning}


\author{Anis Elgabli, $^\dagger$Jihong Park, Chaouki Ben Issaid, and Mehdi Bennis \thanks{A. Elgabli, C. Ben Issaid, and M. Bennis are with the Centre of Wireless Communications, University of Oulu, 90014 Oulu, Finland, email: \{anis.elgabli, chaouki.benissaid, mehdi.bennis\}@oulu.fi.}\\ 
\thanks{$^\dagger$J. Park is with the School of Information Technology, Deakin University, Geelong, VIC 3220, Australia, email: jihong.park@deakin.edu.au.}}


\begin{document}

\maketitle

\begin{abstract}
Wireless connectivity is instrumental in enabling scalable federated learning (FL), yet wireless channels bring challenges for model training, in which channel randomness perturbs each worker's model update while multiple workers' updates incur significant interference under limited bandwidth. To address these challenges, in this work we formulate a novel constrained optimization problem, and propose an FL framework harnessing wireless channel perturbations and interference for improving privacy, bandwidth-efficiency, and scalability. The resultant algorithm is coined \emph{analog federated ADMM (A-FADMM)} based on analog transmissions and the alternating direction method of multipliers (ADMM). In A-FADMM, all workers upload their model updates to the parameter server (PS) using a single channel via analog transmissions, during which all models are perturbed and aggregated over-the-air. This not only saves communication bandwidth, but also hides each worker's exact model update trajectory from any eavesdropper including the honest-but-curious PS, thereby preserving data privacy against model inversion attacks. We formally prove the convergence and privacy guarantees of A-FADMM for convex functions under time-varying channels, and numerically show the effectiveness of A-FADMM under noisy channels and stochastic non-convex functions, in terms of convergence speed and scalability, as well as communication bandwidth and energy efficiency.
\end{abstract}
\begin{IEEEkeywords}
Analog federated ADMM, digital federated ADMM, distributed machine learning, privacy, time-varying channels.
\end{IEEEkeywords}


\section{Introduction}
Wireless connectivity has a great potential to scale up federated learning (FL) \cite{Brendan17,pap:jakub16,Google:FL19} by cutting the wires between workers and their parameter server (PS) \cite{park2018wireless, FL_Nishio,Wang:2019aa,YangQuekPoor:2019aa,Chen:20019aa,Amiri:SPAWC19,Zhu:19,Sery:19,Zhu:20,park2020:cml}. However, wirelessly connected workers may \emph{interfere} with each other during their over-the-air transmissions, while competing over limited bandwidth. Most existing works avoid such interference by allocating dedicated channels to different workers \cite{park2018wireless,FL_Nishio,Wang:2019aa,YangQuekPoor:2019aa,Chen:20019aa}, which is not scalable and requires significant amounts of bandwidth to support a large number of workers. Alternatively, taking a cue from FL operations, several recent works have proposed a method harnessing interference without separate channel allocation \cite{Amiri:SPAWC19,Zhu:19,Sery:19,Zhu:20,park2020:cml} as we review next.

As illustrated in Fig.~\ref{Fig:overview}(a), FL aims to minimize  {\sm$\sum_{n=1}^N f_n(\boldsymbol{\Theta})$\nm}  assuming {\sm$N$\nm} workers, by periodically uploading the local model {\sm$\boldsymbol{\theta}_n$\nm} (or local gradient {\sm$\nabla f_n(\boldsymbol{\Theta})$\nm})  of each worker and downloading a global model {\sm$\boldsymbol{\Theta}$\nm} from the PS. Under digital transmissions, i.e, \emph{digital FL}, $(i)$ the PS first receives each {\sm$\boldsymbol{\theta}_n$\nm} through a \emph{separate channel per each worker}, and $(ii)$ combines them into a global model {\sm$\boldsymbol{\Theta}=\frac{1}{N}\sum_{n=1}^N \boldsymbol{\theta}_n$\nm}. The first step is however vulnerable to model inversion and reconstruction attacks \cite{Matt:CCS15,Hitaj:CCS17} by an honest-but-curious PS. Since the entire model update trajectory is observable, the PS can infer the training samples, violating data privacy. Furthermore, it is not communication-efficient because workers have to be assigned orthogonal channels in order for the PS to decode their models. However, the PS only needs {\sm$\sum_{n=1}^N \boldsymbol{\theta}_n$\nm} rather than individual local models, motivating the need for analog over-the-air aggregation schemes as described next.

Unlike digital signal transmission of bit streams, each analog signal directly represents an element $\boldsymbol{\theta}_{n,i}$ of $\boldsymbol{\theta}_n$ by its amplitude, allowing signal superposition. Exploiting this property, each worker in \emph{analog FL} transmits an analog signal $\boldsymbol{\theta}_{n,i}$ over a \emph{shared channel among all workers}, through which all $\boldsymbol{\theta}_{n,i}$'s are superpositioned over the air while hiding each private local model in the crowd. Consequently, the PS receives $\sum_{n=1}^N h_{n,i} \boldsymbol{\theta}_{n,i}$ that is \emph{perturbed} by complex fading channel $h_{n,i}$. Due to the perturbed models, the convergence and accuracy of analog FL depend significantly on the channel characteristics. To obviate this problem, it is common to cancel out the perturbation via a \emph{channel inversion} method dividing $\boldsymbol{\theta}_{n,i}$ by $h_{n,i}$ before transmissions, as illustrated in Fig.~\ref{Fig:overview}(b). With channel inversion, transmissions are only allowed when $|h_{n,i}|^2\geq \varepsilon$, in order to avoid excessive transmit power due to the inversion \cite{Amiri:SPAWC19,Zhu:19,Sery:19}. The choice of $\varepsilon$ is heuristic, hindering the convergence analysis of analog FL. Moreover, this approach does not guarantee privacy. For example, when only one worker has a good gain in one channel, it reveals its local model updates to the PS, compromising privacy. Last but not least, the rule of transmitting only when $|h_{n,i}|^2\geq \varepsilon$ totally ignores the power of the transmitted symbol itself. Note that we are sending analog signals, and hence these limitations mandate a non-channel inversion method with formal convergence and privacy guarantees.


\begin{figure*}[t]
    \centering
    \includegraphics[width=.99\textwidth]{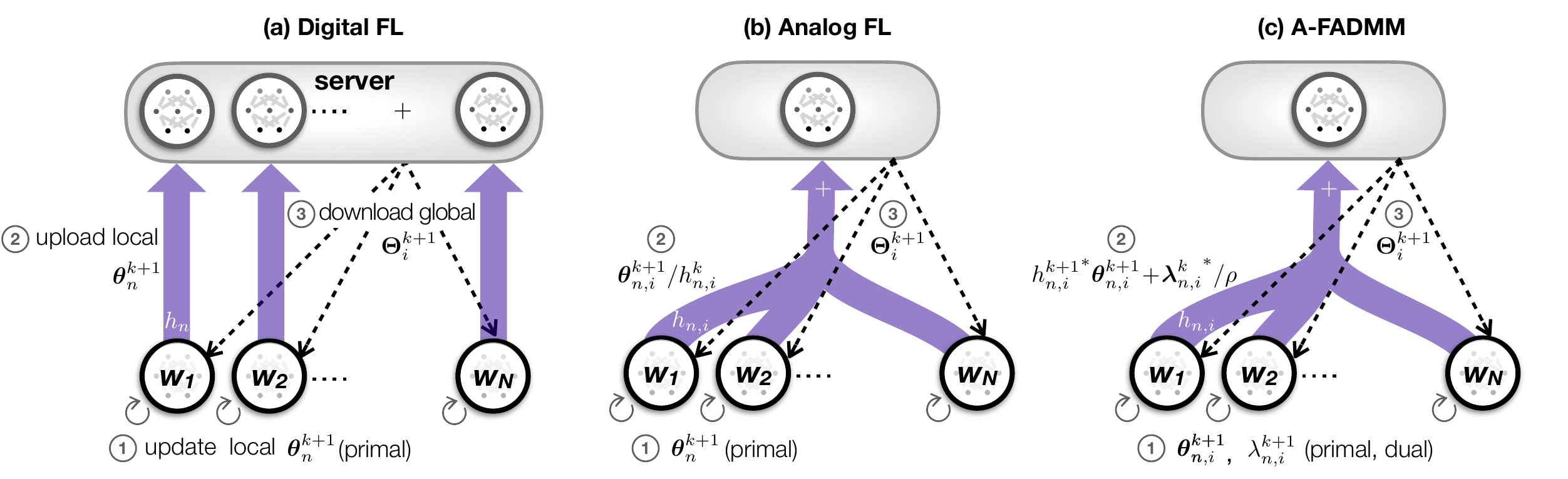}
    \caption{Schematic illustrations of: (a) digital federated learning (FL), (b) analog FL with channel inversion, and (c) \emph{analog-federated ADMM (A-FADMM)} without channel inversion.}
    \label{Fig:overview}
    \vskip -10pt
    \end{figure*} 
    
In this paper, we propose A-FADMM, a novel federated learning framework rooted in the alternating direction method of multipliers (ADMM) and analog over-the-air aggregation without channel inversion. Compared to the existing analog FL algorithms \cite{Amiri:SPAWC19,Zhu:19,Sery:19} based on first-order methods such as GD and SGD, A-FADMM is a \emph{second-order} method providing faster convergence \cite{boyd2011distributed,deng2017parallel}. Furthermore, A-FADMM does not apply channel inversion, so the PS receives the aggregate of perturbed model updates, thereby preserving privacy even when a single worker is transmitting over a given channel. This is done by integrating channel perturbations into the problem formulation, which may hamper the convergence particularly under time-varying channels. A-FADMM thus carefully updates the model parameters so that the time-varying channel does not hinder its convergence. Our major contributions are summarized as follows.
\begin{itemize}[leftmargin=*]
    \item On the theory front, this is the first work on analog transmission based distributed ADMM (primal-dual method) ensuring convergence while preserving privacy, under convex functions and time-varying channel dynamics. Existing works on analog FL focus on first-order primal methods, without proving convergence nor privacy guarantees \cite{Amiri:SPAWC19,Zhu:19,Sery:19}.   
    
    \item On the algorithmic front, our proposed A-FADMM is the first analog FL algorithm overcoming channel perturbations without channel inversion while ensuring convergence over time-varying channels. 
    
    

    \item We numerically show that A-FADMM converges faster with comparable accuracy, compared to its digital transmission counterpart D-FADMM. Our simulations clarify under which conditions A-FADMM is preferable to D-FADMM, in terms of energy-efficiency, low-latency, and scalability. 
    
    \item To further support the feasibility of A-FADMM, we elaborate on how to cope with constrained transmit power. Moreover, to corroborate the feasibility under stochastic and non-convex functions, we provide simulation results for the stochastic version of A-FADMM (SA-FADMM) based on a deep neural network (DNN) in an image classification task.
\end{itemize}

The rest of the paper is structured as follows. In Section~II, the proposed Analog Federated ADMM (A-FADMM) is described. The convergence of the proposed algorithm is studied in Section~III, and the privacy analysis is provided in Section~IV. The effectiveness of A-FADMM is numerically corroborated in Section~V, in terms of accuracy and communication efficiency for linear regression and image classification using DNNs, followed by our conclusion in Section~VI.


\section{Analog Federated ADMM}\label{SecPeoposedAnalog}
A-FADMM aims to aggregate multiple workers' updates at the PS without competition on the available bandwidth via analog transmissions. In this section, we describe A-FADMM operations based on a novel problem formulation, and explain how A-FADMM copes with the nuisances incurred by analog transmissions, in terms of time-varying channel fading, noise, and transmit power limitation.



\subsection{Problem Formulation} 
The original problem of FL is to minimize $\frac{1}{N}\sum_{n=1}^N f_n(\boldsymbol{\Theta})$ with $N$ workers, by locally minimizing $f_n(\boldsymbol{\theta}_n)$ at each worker and globally averaging their model
parameters $\boldsymbol{\theta}_n$ at the PS. This boils down to the average consensus problem (\textbf{P1}) below.
\begin{align} 
       \nonumber (\textbf{P1})~~ \min_{\boldsymbol{\Theta},\{\boldsymbol{\theta}_n\}_{n=1}^N}\ \ \ & \sum_{n=1}^N f_n(\boldsymbol{\theta}_n)    \\
    \text{s.t.}\ \ &
        \boldsymbol{\theta}_{n} =\boldsymbol{\Theta}, \ \ \forall n \label{P1_const}
\end{align} 

Primal-dual methods can solve (\textbf{P1}), among which ADMM is one popular approach \cite{glowinski1975approximation,boyd2011distributed,deng2017parallel}. To implement this using digital transmissions, workers transmit their local model updates to the PS through \emph{orthogonal channels}, wherein each local model is the primal variable $\boldsymbol{\theta}_{n}$ while its dual variable $\bm{\lambda}_n$ is locally updated. Next, we explain the steps of the standard ADMM technique \cite{glowinski1975approximation,boyd2011distributed,deng2017parallel} in solving ({\bf P1}).

 The augmented Lagrangian of ({\bf P1}) is written as
\begin{align}
\nonumber &\boldsymbol{\mathcal{L}_\rho}(\boldsymbol{\Theta},\{\boldsymbol{\theta}_n\}_{n=1}^N,\{\boldsymbol{\lambda}_n\}_{n=1}^N )\\
&=\sum_{n=1}^N f_n(\boldsymbol{\theta}_n) + \sum_{n=1}^N \ip{\bm{\lambda}_{n}, \boldsymbol{\theta}_n - \boldsymbol{\Theta}}+\frac{\rho}{2} \sum_{n=1}^N \parallel \boldsymbol{\theta}_n - \boldsymbol{\Theta}\parallel_2^2,
\label{augmentedLag4}
\end{align}
where $\rho>0$~is a constant penalty for the disagreement between $\bbtheta_n$~and $\boldsymbol{\Theta}$. At iteration $k+1$, each worker updates its primal variable by solving the following problem
\small
\begin{align}
{\boldsymbol{\theta}}_{n}^{k+1} =\arg\min_{\bbtheta_n}\Big\{f_n(\boldsymbol{\theta}_n) +&\ip{\boldsymbol{\lambda}_n^k, \boldsymbol{\theta}_n - \boldsymbol{\Theta}^{k}}+\frac{\rho}{2}\parallel \boldsymbol{\theta}_n - \boldsymbol{\Theta}^k\parallel_2^2\Big\}.
\label{headUpdate}
\end{align}
\normalsize
Based on all workers' primal variable updates $\{\boldsymbol{\theta}_n^{k+1}\}_{n=1}^N$ and previous dual variables $\{\boldsymbol{\lambda}_n^{k}\}_{n=1}^N$, PS updates the global model $\boldsymbol{\Theta}^{k+1}$ as follows
\begin{align}
\boldsymbol{\Theta}^{k+1}=\frac{1}{N}\sum_{n=1}^N ({\boldsymbol{\theta}}_{n}^{k+1}+\frac{1}{\rho}\bm{\lambda}_n^k).
\end{align}
Finally, given the updated global model $\boldsymbol{\Theta}^{k+1}$, each worker updates the dual variable $\bm{\lambda}_n^{k+1}$ as follows
\begin{align}
\bm{\lambda}_n^{k+1}=\bm{\lambda}_n^{k}+\rho(\boldsymbol{\theta}_n^{k+1} - \boldsymbol{\Theta}^{k+1}).
\label{dualUpdate1}
\end{align}
To implement ADMM operations, each worker uploads $\boldsymbol{\theta}_n^{k+1}$ to the PS, and then downloads $\bm{\Theta}^{k+1}$ from PS, followed by locally updating the dual variable $\bm{\lambda}^{k+1}$. Under digital transmission, the entire bandwidth is orthogonally allocated to each worker, while each update uploading or downloading corresponds to exchanging a fixed number of bits, e.g., 32 bits per model's element. To cope with channel fading and noise, adaptive modulation and error-correction coding are used in digital transmission. 


By contrast, using analog transmissions, $N$ workers transmit the $i$-th element of their updates using the same $i$-th subcarrier (channel). The benefit of analog transmissions is to aggregate all workers' updates over-the-air in one channel use, but at the cost of perturbations by channel fading $h_{n,i}$ which are assumed to follow an independent and identically distributed (IID) complex Gaussian distribution. These fading perturbations are often cancelled by multiplying $1/h_{n,i}$ before transmission, i.e., channel inversion \cite{Amiri:SPAWC19,Zhu:19,Sery:19}. Alternatively, we avoid channel inversion by reformulating (\textbf{P1}) into (\textbf{P2}) below, where the subscript $i$ denotes the $i$-th element. 
\begin{align}
        (\textbf{P2})~~ \min_{\boldsymbol{\Theta}, \{\boldsymbol{\theta}_n\}_{n=1}^N} \ & \sum_{n=1}^N f_n(\boldsymbol{\theta}_n)      \label{com_agadmm}\\
        \text{s.t.}\ \ &
        h_{n,i}\boldsymbol{\theta}_{n,i} = h_{n,i}\boldsymbol{\Theta}_{i}, \  \forall n,i
        \label{com_agadmm_c1}
\end{align} 

Note that, in (\textbf{P2}), we assume that the channel is constant and the system is noise free. We will relax these assumptions later in the section.  In (\textbf{P2}), constraint \eqref{P1_const} is recast as its equivalent constraint \eqref{com_agadmm_c1} that allows A-FADMM to be updated directly using perturbed updates, as detailed next.

\subsection{Primal, Dual, and Global Model Updates}
The Lagrangian of (\textbf{P2}) is written as follows
\begin{align}
\nonumber \boldsymbol{\mathcal{L}_\rho}(\boldsymbol{\theta}_n,\boldsymbol{\lambda})&=\sum_{n=1}^N f_n(\boldsymbol{\theta}_n) + \sum_{i=1}^d \sum_{n=1}^N \lamb _{n,i}^* h_{n,i}(\boldsymbol{\theta}_{n,i} - \boldsymbol{\Theta}_{i})\\
&+\frac{\rho}{2} \sum_{i=1}^d \sum_{n=1}^N |h_{n,i}|^2(\boldsymbol{\theta}_{n,i} - \boldsymbol{\Theta}_i)^2,
\label{augmentedLagAG}
\end{align} 
where $ \lamb _{n,i}^*$ is the conjugate of the complex dual variable $\lamb _{n,i}$, $d$ is the cardinality of ${\boldsymbol{\theta}}_{n}$ (i.e., model size), and $\rho>0$ is a constant penalty for the local and global model disagreement. At iteration $k+1$, each worker updates its primal variable ${\boldsymbol{\theta}}_{n}^{k+1}$ so as to minimize $\boldsymbol{\mathcal{L}_\rho}(\boldsymbol{\theta}_n,\boldsymbol{\Theta}^k,\boldsymbol{\lambda}^k)$. Hence, ${\boldsymbol{\theta}}_{n,i}^{k+1}$, $~ \forall i = 1,\cdots, d$, should satisfy the following equation
\begin{align}
\boldsymbol{0} \in \partial_i f_n(\boldsymbol{\theta}_n^{k+1}) + {\lamb _{n,i}^k}^* h_{n,i} +\rho |h_{n,i}|^2 (\boldsymbol{\theta}_{n,i}^{k+1} -\boldsymbol{\Theta}_i^k),
\label{workerUpdate0}
\end{align}
where $\partial_i f_n(\cdot)$ denotes the $i$-th element in the sub-gradient vector of $f_n(\cdot)$. 

Next, PS updates the $i$-th element ${\boldsymbol{\Theta}}_{i}^{k+1}$ of the global model that minimizes $\boldsymbol{\mathcal{L}_\rho}(\boldsymbol{\theta}_n^{k+1},\boldsymbol{\Theta},\boldsymbol{\lambda}^k)$. By taking the derivative of $\boldsymbol{\mathcal{L}_\rho}(\boldsymbol{\theta}_n^{k+1},\boldsymbol{\Theta},\boldsymbol{\lambda}^k)$ with respect to ${\boldsymbol{\Theta}}_{i}$ and equating to zero, ${\boldsymbol{\Theta}}_{i}^{k+1}$ is given~by
\begin{align}
\boldsymbol{\Theta}_i^{k+1}=\frac{1}{ \sum_{n=1}^N |h_{n,i}|^2} \sum_{n=1}^N \left(|h_{n,i}|^2{\boldsymbol{\theta}}_{n,i}^{k+1} + h_{n,i} {\bblambda_{n,i}^k}^*/\rho  \right).
\label{centralUpdate}
\end{align}

%

Finally, the dual variables are updated at each worker as follows
\begin{align}
{\lamb _{n,i}^{k+1}}&= {\lamb _{n,i}^{k}}+\rho h_{n,i} (\boldsymbol{\theta}_{n,i}^{k+1} - \boldsymbol{\Theta}_{i}^{k+1}).
\label{dualUpdate0}
\end{align}
Next, we will discuss how to implement the aforementioned update rules under time-varying channel fading, noise, and transmit power limitation.

\subsection{Time-varying Channel}
The primal-dual update rules in \eqref{workerUpdate0} and \eqref{dualUpdate0} do not ensure the non-increase of the optimality gap when $h_{n,i}^{k+1}\neq h_{n,i}^{k}$. In this case, instead of updating $\boldsymbol{\theta}_{n,i}^{k+1}$ using \eqref{workerUpdate0}, we choose $\boldsymbol{\theta}_{n,i}^{k+1}=\boldsymbol{\theta}_{n,i}^{k}$, and find ${\lamb_{n,i}^{k}}^*$ that satisfies \eqref{workerUpdate0}; in other words, the primal update problem \eqref{workerUpdate0} is flipped to the dual update problem. In doing so, A-FADMM copes with the channel changes  reflected in the dual variables, and ensures that the primal-dual variables are still optimal for the given channel~$h_{n,i}^{k+1}$.







\begin{algorithm}[t]
    {\tiny 
\small
\begin{algorithmic}[1] 
 \STATE {\bf Input}: $N, f_n(\boldsymbol{\theta}_n)\forall n, \rho, K$, {\bf Output}: $\boldsymbol{\theta}_n, \forall n$
    \STATE {\bf Initialization}:  $\boldsymbol{\theta}_n^{(0)}, \boldsymbol{\Theta}_n^{(0)}, \boldsymbol{\lambda}_n^{(0)}, \forall n$

\STATE {\bf A-FADMM:}
\WHILE {$k \leq K$}
\STATE { \bf  All workers ($n \in \{1,\cdots, N\}$): in Parallel}
\FOR{$i=1,\cdots,d$}
\IF{$h_{n,i}^{k+1}= h_{n,i}^{k}$}
\STATE \quad Find $\boldsymbol{\theta}_{n,i}^{k+1}$ that satisfies~\eqref{workerUpdateNoisy}
\ELSE
\STATE $\boldsymbol{\theta}_{n,i}^{k+1}=\boldsymbol{\theta}_{n,i}^{k}$
\STATE Find $(\boldsymbol{\lambda}_{n,i}^{k})^*$ that satisfies \eqref{workerUpdateNoisy}
\ENDIF
\ENDFOR
\STATE \quad Send~${(h_{n,i}^{k+1})}^*{\boldsymbol{\theta}}_{n,i}^{k+1}+ {(\bblambda_{n,i}^k)}^*/\rho$, $\forall i=1,\cdots,d$ to the parameter server

\STATE { \bf  Parameter Server: }
\STATE \quad Find $\boldsymbol{\Theta}_i^{k+1}$ that satisfies~\eqref{centralUpdateNoisy}
\STATE \quad Broadcast~$\boldsymbol{\Theta}_i^{k+1}$, $\forall i=1,\cdots,d$ to all workers
\STATE { \bf  All workers ($n \in \{1,\cdots, N\}$): in Parallel}
\STATE \quad Update $\lamb _{n,i}^{k+1}$ locally via~\eqref{dualUpdate1Noisy}
\STATE $k \leftarrow k+1$
\ENDWHILE 

   \end{algorithmic}
\caption{Analog Federated ADMM (A-FADMM)} \label{algo1}
}						
\end{algorithm}

\subsection{Uploading and Downloading Information}
We assume that every worker knows its individual channel {\sm$h_{n,i}^{k+1}$\nm}, while the PS knows the aggregate channel {\sm$\sum_{n=1}^N |h_{n,i}^{k+1}|^2$\nm\!} using pilot signals \cite{Ozdemir:ST07}. Then, to update the $i$-th element of the global model {\sm$\bm{\Theta}_i$\nm}, each worker uploads {\sm${h_{n,i}^{k+1}}^*\bm{\theta}_{n,i}^{k+1} + {\bm{\lambda}_{n,i}^k}^*/\rho$\nm}, where $h_{n,i}^*$ is the conjugate of the complex channel $h_{n,i}$. Hence, after channel perturbation, the PS receives {\sm$\sum_{n=1}^N\!( |h_{n,i}^{k+1}|^2{\boldsymbol{\theta}}_{n,i}^{k+1} +  h_{n,i}^{k+1}{\bblambda_{n,i}^k}^*\!/\rho)$\nm} in \eqref{centralUpdate}. By downloading $\bm{\Theta}_i$, each worker locally updates the primal and dual variables using \eqref{workerUpdate0} and \eqref{dualUpdate0}, respectively.

\begin{figure*}
	\centering
	\includegraphics[width=.8\textwidth]{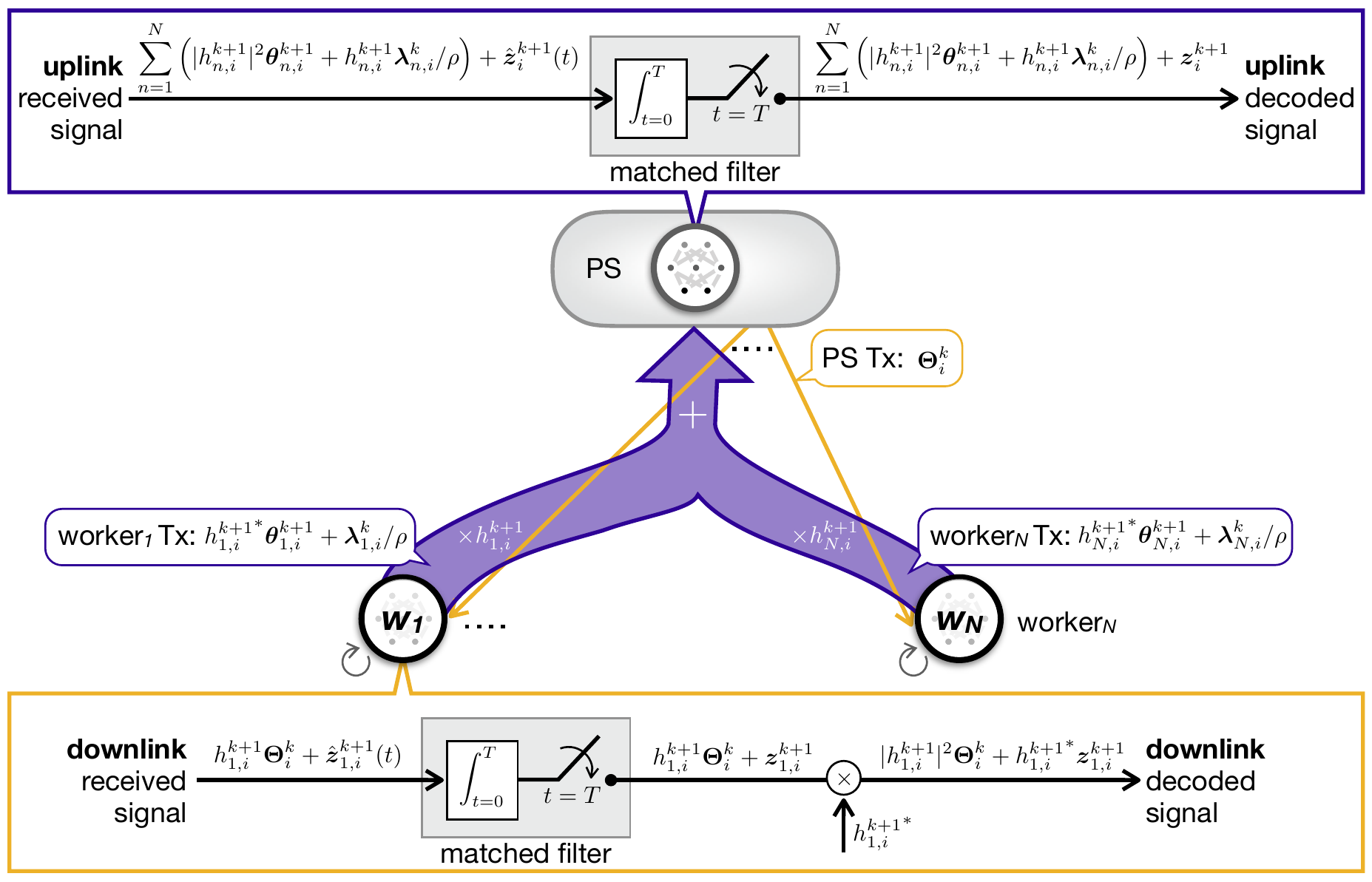}
	\caption{An illustration of uplink and downlink communication in A-FADMM under channel noise.}
	\label{commSysFig}
\end{figure*}

\subsection{Noisy Channel}
In practical systems, the received signal is not only perturbed by channel fading but also distorted by additive white Gaussian noise (AWGN). Under digital transmissions, the noise can be alleviated using digital modulation and error correction coding schemes \cite{Giannakis:04,Heath:Mag02}. By contrast, A-FADMM conveys uncoded information using analog transmissions. Therefore, the received information is perturbed by multiplicative fading and distorted by additive noise. A-FADMM directly utilizes the fading perturbed updates, yet still corrects channel noise using matched filtering (i.e., correlator receiver) as follows.

In the uplink of iteration $k+1$, as illustrated in Fig.~\ref{commSysFig}, each worker uploads its update {\sm${h_{n,i}^{k+1}}^*\bm{\theta}_{n,i}^{k+1} + {\bm{\lambda}_{n,i}^k}^*/\rho$\nm} to the PS over the $i$-th subcarrier for $T$ seconds. Propagating through the wireless channel, each update is perturbed by fading (i.e., multiplying by $h_{n,i}^{k+1}$), aggregated across all workers, and distorted by channel noise (i.e., adding $\hat{\boldsymbol{z}}_i^{k+1}(t)\sim\mathcal{CN}(0,N_o)$). Consequently, the PS receives {\sm$\sum_{n=1}^N\!( |h_{n,i}^{k+1}|^2{\boldsymbol{\theta}}_{n,i}^{k+1} +  h_{n,i}^{k+1}{\bblambda_{n,i}^k}^*\!/\rho) + \hat{\bm{z}}_{n,i}^{k+1}(t)$\nm} at every instant $t\in[0,T]$, where the AWGN {\sm$\hat{\bm{z}}_{n,i}^{k+1}(t)\sim\mathcal{CN}(0,N_o)$\nm}. The matched filter (i.e., correlator receiver) at PS integrates the received signals during~$T$, and takes a sample at $t=T$, resulting in
\begin{align}
	\nonumber &\frac{1}{T} \int_{t=0}^T\!\Big\{\!\big(\sum_{n=1}^N |{h_{n,i}^{k+1}}|^2 {\boldsymbol{\theta}}_{n,i}^{k+1}\!+\! {\bblambda_{n,i}^k}^* h_{n,i}^{k+1}/\rho\big) \!+\! \hat{\boldsymbol{z}}_i^{k+1}(t)\!\Big\} \text{dt} \\
	&= \sum_{n=1}^N |h_{n,i}^{k+1}|^2{\boldsymbol{\theta}}_{n,i}^{k+1}\!+\! {\bblambda_{n,i}^k}^*h_{n,i}^{k+1}/\rho \!+\! {\bm{z}}_{n,i}^{k+1}\!,
\end{align}
where the resultant noise ${\bm{z}}_{n,i}^{k+1} \sim\mathcal{CN}(0,N_0/T)$ whose variance is reduced from $N_0$ to $N_0/T$. Accordingly, the global model update is given by
\begin{align}
\boldsymbol{\Theta}_i^{k+1}    =\frac{\sum_{n=1}^N \left(|h_{n,i}^{k+1}|^2{\boldsymbol{\theta}}_{n,i}^{k+1} + h_{n,i}^{k+1} {\bblambda_{n,i}^k}^*/\rho + {\bf Re}\{\bm{z}_{n,i}^{k+1}\} \right)}{ \sum_{n=1}^N |h_{n,i}^{k+1}|^2},
    \label{centralUpdateNoisy}
\end{align}
where {\sm${\bf Re}\{z_{n,i}^{k+1}\}$\nm} is because {\sm$\boldsymbol{\Theta}_i^{k+1}$\nm} is real. Note that {\sm$\lamb _{n,i}^k$\nm} is complex, but the term {\sm$h_{n,i}^{k+1} {\lamb _{n,i}^k}^*$\nm} is still real. 


Likewise, in the downlink, the matched filter at each worker provides {\sm$h_{n,i}^{k+1}\bm{\Theta}_i^{k+1}+\bm{z}_{n,i}^{k+1}$\nm}. To make this output fit with the primal and dual updates, the output is multiplied by {\sm${h_{n,i}^{k+1}}^*$, and $|h_{n,i}^{k+1}|^2\bm{\Theta}_i^{k+1}+{h_{n,i}^{k+1}}^*\bm{z}_{n,i}^{k+1}$\nm} is used for the following primal update rule
\begin{align}
\nonumber &\boldsymbol{0} \in \partial_i f_n(\boldsymbol{\theta}_n^{k+1}) + (\lamb _{n,i}^k)^*h_{n,i}^{k+1}\\
& +\rho |h_{n,i}^{k+1}|^2( \boldsymbol{\theta}_{n,i}^{k+1} - \boldsymbol{\Theta}_i^k)-\rho {\bf Re}\{{h_{n,i}^{k+1}}^* \bm{z}_{n,i}^{k+1}\},
    \label{workerUpdateNoisy}
\end{align}
and the dual update rule is given by
\begin{align}
{\lamb _{n,i}^{k+1}}= {\lamb _{n,i}^{k}}+\rho h_{n,i}^{k+1}(\boldsymbol{\theta}_{n,i}^{k+1} - \boldsymbol{\Theta}_{i}^{k+1})-\rho {\bf Re}\{\bm{z}_{n,i}^{k+1}\}.
\label{dualUpdate1Noisy}
\end{align}
The aforementioned operations of A-FADMM are summarized in Algorithm~\ref{algo1} in the previous page.

\subsection{Power Control} 
Another practical concern is each worker's transmit power limitation. In order not to violate the maximum power budget $P$, before transmission each worker calculates its local power scaling factor {\sm$\alpha_{n}^{k+1}$\nm} such that ({\sm$\alpha_n^{k+1})^2\sum_{i=1}^d |{h_{n,i}^{k+1}}^*\bm{\theta}_{n,i}^{k+1} + {\bm{\lambda}_{n,i}^k}^*/\rho|^2 = P$\nm}, and sends {\sm$\alpha_{n}^{k+1}$\nm} to~the PS. Then, the PS determines {\sm$\alpha^{k+1} = \min\{\alpha_1^{k+1},\cdots, \alpha_N^{k+1}\}$\nm} that is downloaded by every worker. Finally, each worker transmits {\sm$\alpha^{k+1}( {h_{n,i}^{k+1}}^*\bm{\theta}_{n,i}^{k+1} + {\bm{\lambda}_{n,i}^k}^*/\rho)$\nm} to the PS, and after matched filtering and dividing by~{\sm$\alpha^{k+1}$\nm}, the PS obtains {\sm$\sum_{n=1}^N\!( |h_{n,i}^{k+1}|^2{\boldsymbol{\theta}}_{n,i}^{k+1} +  h_{n,i}^{k+1}{\bblambda_{n,i}^k}^*\!/\rho) + {\bm{z}}_{n,i}^{k+1}/\alpha^{k+1}$\nm} for the global model update. Note that {\sm$\alpha^{k+1}$\nm} and {\sm$\alpha_n^{k+1}$\nm} are scalar values that can be exchanged with negligible communication overhead, e.g., through separate control signaling channels \cite{3GPP:Rel15}.


\section{Convergence Analysis}
In this section, we prove the optimality and convergence of A-FADMM for convex functions under noise-free but time-varying channels. The necessary and sufficient optimality conditions are the primal and dual feasibility given by
\begin{align}
&\boldsymbol{\theta}_n^\star = \boldsymbol{\Theta}^\star\; \forall n \quad \text{and}\\
&\boldsymbol{0} \in \partial_i f_n(\boldsymbol{\theta}_n^\star)+ \boldsymbol{\mu}_{n,i}^\star\; \forall n,
\label{eq2}
\end{align}
where the superscript $\star$ denotes the value at the convergence point. The term $\bm{\mu}_{n,i}={\lamb _{n,i}}^* h_{n,i}$ is the dual variable combined with channel fading. According to \eqref{dualUpdate0}, $\bm{\mu}_{n,i}$ is updated as follows
\begin{align}
\bm{\mu}_{n,i}^{k+1} = \bm{\mu}_{n,i}^{k} + \rho |h_{n,i}^{k+1}|^2 \boldsymbol{r}_{n,i}^{k+1}, \label{Eq:mu}
\end{align}
where $\boldsymbol{r}_{n,i}^{k+1}= \boldsymbol{\theta}_{n,i}^{k+1}-\boldsymbol{\Theta}_i^{k+1}$ is the $n$-th worker's primal residual. Applying the modified dual update rule \eqref{Eq:mu} to the primal update rule \eqref{workerUpdate0}, we obtain
\begin{align}
\boldsymbol{0} \in \partial_i f_n(\boldsymbol{\theta}_{n}^{k+1}) + \bm{\mu}_{n,i}^{k+1}  + \boldsymbol{S}_{n,i}^{k+1},
\label{eq_10}
\end{align}
where $\boldsymbol{S}_{n,i}^{k+1}=\rho |h_{n,i}^{k+1}|^2 (\boldsymbol{\Theta}_i^{k+1}-\boldsymbol{\Theta}_i^{k})$ is the $n$-th worker's dual residual. Now, we are in position to introduce our first result, Lemma \ref{lemma:first}. 

\begin{lemma}\label{lemma:first}
\emph{For the iterates $\boldsymbol{\theta}_{n}^{k+1}$ the optimality gap of A-FADMM, is upper and lower bounded as follows.}
\begin{align}\label{lem1Lower}
    \text{(Lower bound)}\quad &\sum_{n=1}^N \left[f_n(\boldsymbol{\theta}_n^{k+1}) - f_n(\boldsymbol{\theta}_n^\star)\right] \nonumber \\ 
    &\geq \sum_{n=1}^N \sum_{i=1}^d \bm{\mu}_{n,i}^\star \bbr_{n,i}^{k+1}\\
\label{lem1Upper}
\nonumber \text{(Upper bound)}\quad &\sum_{n=1}^N \left[f_n(\boldsymbol{\theta}_n^{k+1}) - f_n(\boldsymbol{\theta}_n^\star)\right]\\
&\hspace{-50pt} \leq - \sum_{n=1}^N \sum_{i=1}^d \left[ \bm{\mu}_{n,i}^{k+1} \boldsymbol{r}_{n,i}^{k+1} -  \mathbf{S}_{n,i}^{k+1}(\boldsymbol{\theta}_{n,i}^\star-\boldsymbol{\theta}_{n,i}^{k+1})\right].
\end{align}
\end{lemma}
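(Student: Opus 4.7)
The plan is to derive both inequalities by applying the first-order convexity inequality for $f_n$ at two anchor points — $\boldsymbol{\theta}_n^{\star}$ for the lower bound, and $\boldsymbol{\theta}_n^{k+1}$ for the upper bound — and plugging in the ready-made subgradients supplied by the two stationarity conditions \eqref{eq2} and \eqref{eq_10}. The combinatorial glue that turns the raw convexity inequalities into the residual-based right-hand sides of \eqref{lem1Lower}--\eqref{lem1Upper} is a zero-sum identity for the combined dual variables, described next.

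Before attacking either bound, I would record the identity $\sum_{n=1}^{N} \bm{\mu}_{n,i}^{k} = 0$ for every $k$ and every $i$ (including at $k = \star$), the A-FADMM analogue of $\sum_n \bm{\lambda}_n = 0$ in vanilla ADMM. It is obtained by rearranging the global update \eqref{centralUpdate} into $\rho\sum_n |h_{n,i}^{k+1}|^2 \boldsymbol{r}_{n,i}^{k+1} = -\sum_n \bm{\mu}_{n,i}^{k}$ and then feeding this into the combined dual update \eqref{Eq:mu}, whose sum over $n$ telescopes to zero. This identity is the mechanism that converts bare differences of the form $\boldsymbol{\theta}_{n,i}^{k+1} - \boldsymbol{\Theta}_i^{\star}$ into the residual $\boldsymbol{r}_{n,i}^{k+1}$, because any additive $(\boldsymbol{\Theta}_i^{k+1} - \boldsymbol{\Theta}_i^{\star})$ piece is multiplied by $\sum_n \bm{\mu}_{n,i}^{\bullet}$ and therefore vanishes after summing over $n$.

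For the lower bound, I would pick the subgradient $-\bm{\mu}_{n,i}^{\star} \in \partial_i f_n(\boldsymbol{\theta}_n^{\star})$ guaranteed by \eqref{eq2}, apply convexity to get $f_n(\boldsymbol{\theta}_n^{k+1}) - f_n(\boldsymbol{\theta}_n^{\star}) \geq \sum_i (-\bm{\mu}_{n,i}^{\star})(\boldsymbol{\theta}_{n,i}^{k+1} - \boldsymbol{\theta}_{n,i}^{\star})$, sum over $n$, substitute the primal-feasibility relation $\boldsymbol{\theta}_{n,i}^{\star} = \boldsymbol{\Theta}_i^{\star}$, split $\boldsymbol{\theta}_{n,i}^{k+1} - \boldsymbol{\Theta}_i^{\star}$ as $\boldsymbol{r}_{n,i}^{k+1} + (\boldsymbol{\Theta}_i^{k+1} - \boldsymbol{\Theta}_i^{\star})$, and kill the second piece via $\sum_n \bm{\mu}_{n,i}^{\star} = 0$. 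What remains is exactly the right-hand side of \eqref{lem1Lower}. For the upper bound, the move is symmetric: from \eqref{eq_10} the subgradient $-\bm{\mu}_{n,i}^{k+1} - \boldsymbol{S}_{n,i}^{k+1}$ lies in $\partial_i f_n(\boldsymbol{\theta}_n^{k+1})$, so convexity yields $f_n(\boldsymbol{\theta}_n^{\star}) - f_n(\boldsymbol{\theta}_n^{k+1}) \geq \sum_i \bigl(-\bm{\mu}_{n,i}^{k+1} - \boldsymbol{S}_{n,i}^{k+1}\bigr)(\boldsymbol{\theta}_{n,i}^{\star} - \boldsymbol{\theta}_{n,i}^{k+1})$. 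Flipping the inequality, summing over $n$, using $\boldsymbol{\theta}_{n,i}^{\star} = \boldsymbol{\Theta}_i^{\star}$, performing the same $\boldsymbol{\Theta}_i^{\star} - \boldsymbol{\theta}_{n,i}^{k+1} = (\boldsymbol{\Theta}_i^{\star} - \boldsymbol{\Theta}_i^{k+1}) - \boldsymbol{r}_{n,i}^{k+1}$ decomposition on the $\bm{\mu}^{k+1}$-factor, and invoking $\sum_n \bm{\mu}_{n,i}^{k+1} = 0$ to erase the resulting $(\boldsymbol{\Theta}_i^{\star} - \boldsymbol{\Theta}_i^{k+1})$ contribution, leaves the residual term $\bm{\mu}_{n,i}^{k+1}\boldsymbol{r}_{n,i}^{k+1}$ and the dual-residual term $\boldsymbol{S}_{n,i}^{k+1}(\boldsymbol{\theta}_{n,i}^{\star} - \boldsymbol{\theta}_{n,i}^{k+1})$ bundled as in \eqref{lem1Upper}.

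The main obstacle I anticipate is not the convexity calculus — that part is routine ADMM bookkeeping — but verifying that the zero-sum identity $\sum_n \bm{\mu}_{n,i}^{k+1} = 0$ still holds under the time-varying-channel branch of A-FADMM, where whenever $h_{n,i}^{k+1} \neq h_{n,i}^{k}$ the algorithm \emph{flips} the primal update \eqref{workerUpdate0} into a dual update (setting $\boldsymbol{\theta}_{n,i}^{k+1} = \boldsymbol{\theta}_{n,i}^{k}$ and solving for $\lambda$). Care is needed to confirm that, after the flip, the stationarity condition \eqref{eq_10} still holds with the redefined $\bm{\mu}_{n,i}^{k+1}$ and that the global aggregation \eqref{centralUpdate} still produces the cancellation $\sum_n \bm{\mu}_{n,i}^{k+1} = 0$ at the new channel realization $h_{n,i}^{k+1}$; once these are in hand, the rest of the argument goes through verbatim.
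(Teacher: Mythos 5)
Your proposal is correct and follows essentially the same route as the paper: your upper bound is obtained from the stationarity condition \eqref{eq_10} exactly as in the paper (which phrases the convexity step as ``$\boldsymbol{\theta}_n^{k+1}$ minimizes $f_n(\boldsymbol{\theta}_n)+\sum_i(\bm{\mu}_{n,i}^{k+1}+\mathbf{S}_{n,i}^{k+1})\boldsymbol{\theta}_{n,i}$''), and your zero-sum identity $\sum_{n}\bm{\mu}_{n,i}^{k+1}=0$ is precisely what the paper encodes through the $\boldsymbol{\Theta}$-update stationarity and inequality \eqref{fourth}. For the lower bound the paper invokes the saddle-point inequality for $\boldsymbol{\mathcal{L}}_0$ rather than convexity at $\boldsymbol{\theta}^\star$, which is equivalent; note that both your derivation and the paper's appendix arrive at $\geq -\sum_{n}\sum_{i}\bm{\mu}_{n,i}^\star\bbr_{n,i}^{k+1}$, which is the sign actually used in the proof of Theorem~\ref{theorem}, so the sign printed in \eqref{lem1Lower} is a typo and your version is the right one.
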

The detailed proof is provided in Appendix \ref{sec:lem1}.  The main idea for the proof is to utilize the optimality of the updates in \eqref{workerUpdate0} and \eqref{centralUpdate}. We derive the upper bound for the objective function optimality gap in terms of the primal and dual residuals as stated in \eqref{lem1Upper}. To get the lower bound in \eqref{lem1Lower} in terms of the primal residual, the definition of the Lagrangian \eqref{augmentedLagAG} is used at $\rho=0$.  The result in Lemma \ref{lemma:first} is used to derive the main results in  Theorem \ref{theorem} as presented next.

\begin{theorem}\label{theorem}
When $f_n(\boldsymbol{\theta}_n)$ is closed, proper, and convex $\forall n$ and the Lagrangian $\boldsymbol{\mathcal{L}}_{0}$ has a saddle point, under a \emph{time-varying channel}, A-FADMM satisfies the following statements. Then, the optimality gap is non-increasing, \ie 
\begin{align}
    \nonumber &\sum_{n=1}^N \sum_{i=1}^d \{ \frac{1}{\rho |h_{n,i}^{k+1}|^2} \left[ (\bm{\mu}_{n,i}^{k+1} - \bm{\mu}_{n,i}^\star)^2 -  (\bm{\mu}_{n,i}^{k} - \bm{\mu}_{n,i}^\star)^2 \right] \\
    & +  \rho |h_{n,i}^{k+1}|^2 \left[ (\boldsymbol{\Theta}_i^{k+1}- \boldsymbol{\Theta}_{i}^\star)^2 - (\boldsymbol{\Theta}_i^{k} - \boldsymbol{\Theta}_{i}^\star)^2 \right]\} \leq 0.
    \end{align}
\end{theorem}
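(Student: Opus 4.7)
The plan is to prove the claimed Lyapunov-type inequality by combining the two bounds in Lemma~\ref{lemma:first} and then converting the resulting inner products into differences of squared distances, mirroring the classical ADMM convergence argument but adapted to the channel-weighted dual $\bm{\mu}$ and dual residual $\boldsymbol{S}$. Specifically, chaining \eqref{lem1Lower} and \eqref{lem1Upper}, together with the optimality identity $\sum_n \bm{\mu}_{n,i}^\star = 0$ (stationarity of the Lagrangian with respect to $\boldsymbol{\Theta}_i^\star$), should yield a saddle-point-type inequality
\begin{align*}
\sum_{n=1}^N \sum_{i=1}^d \Big[ (\bm{\mu}_{n,i}^{k+1} - \bm{\mu}_{n,i}^\star)\, \boldsymbol{r}_{n,i}^{k+1} + \boldsymbol{S}_{n,i}^{k+1}\big(\boldsymbol{\theta}_{n,i}^{k+1} - \boldsymbol{\theta}_{n,i}^\star\big) \Big] \leq 0.
\end{align*}

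Next, I would use the dual-update identity \eqref{Eq:mu} to write $\boldsymbol{r}_{n,i}^{k+1} = (\bm{\mu}_{n,i}^{k+1} - \bm{\mu}_{n,i}^k)/(\rho|h_{n,i}^{k+1}|^2)$ and apply the algebraic identity $2(a-b)(a-c) = (a-c)^2 + (a-b)^2 - (b-c)^2$ to the first summand with $(a,b,c) = (\bm{\mu}^{k+1}, \bm{\mu}^k, \bm{\mu}^\star)$; this produces exactly the dual telescoping piece of the theorem plus a leftover $\rho|h_{n,i}^{k+1}|^2(\boldsymbol{r}_{n,i}^{k+1})^2$. For the second summand, using $\boldsymbol{\theta}^\star = \boldsymbol{\Theta}^\star$ and the decomposition $\boldsymbol{\theta}_{n,i}^{k+1} - \boldsymbol{\theta}_{n,i}^\star = (\boldsymbol{\Theta}_i^{k+1} - \boldsymbol{\Theta}_i^\star) + \boldsymbol{r}_{n,i}^{k+1}$, the same identity with $(a,b,c) = (\boldsymbol{\Theta}^{k+1}, \boldsymbol{\Theta}^k, \boldsymbol{\Theta}^\star)$ yields the global-model telescoping piece plus two further leftovers $\rho|h_{n,i}^{k+1}|^2(\boldsymbol{\Theta}_i^{k+1}-\boldsymbol{\Theta}_i^k)^2$ and $2\rho|h_{n,i}^{k+1}|^2(\boldsymbol{\Theta}_i^{k+1}-\boldsymbol{\Theta}_i^k)\boldsymbol{r}_{n,i}^{k+1}$.

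The hard part will be absorbing these three leftover residual terms without flipping the direction of the inequality. Collecting them, I expect the grouping $\rho|h_{n,i}^{k+1}|^2\bigl[\boldsymbol{r}_{n,i}^{k+1} + (\boldsymbol{\Theta}_i^{k+1} - \boldsymbol{\Theta}_i^k)\bigr]^2 \geq 0$, which lies on the favorable side of the inequality and can therefore be dropped to obtain the claim. This closure into a perfect square is also what justifies the \emph{common} weighting by $|h_{n,i}^{k+1}|^2$ on both telescoping differences in the theorem (rather than a mixed $|h_{n,i}^k|^2$/$|h_{n,i}^{k+1}|^2$ weighting, which would not telescope).

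Finally, the time-varying channel case must be checked. Whenever $h_{n,i}^{k+1} \neq h_{n,i}^k$, Algorithm~\ref{algo1} freezes the primal, $\boldsymbol{\theta}_{n,i}^{k+1} = \boldsymbol{\theta}_{n,i}^k$, and re-solves \eqref{workerUpdate0} for an adjusted $\bblambda_{n,i}^k$ so that the KKT condition holds under the new $h_{n,i}^{k+1}$. Because Lemma~\ref{lemma:first} only requires the optimality conditions \eqref{workerUpdate0} and \eqref{centralUpdate} to hold at iteration $k+1$ under the current channel, all of its bounds remain valid with $|h_{n,i}^{k+1}|^2$ appearing uniformly, and the derivation above carries through unchanged to yield the stated Lyapunov non-increase.
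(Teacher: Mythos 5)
Your proposal is correct and follows essentially the same route as the paper's proof: combine the upper and lower bounds of Lemma~\ref{lemma:first} into the inequality $\sum_{n,i}[(\bm{\mu}_{n,i}^{k+1}-\bm{\mu}_{n,i}^\star)\boldsymbol{r}_{n,i}^{k+1}+\boldsymbol{S}_{n,i}^{k+1}(\boldsymbol{\theta}_{n,i}^{k+1}-\boldsymbol{\theta}_{n,i}^\star)]\leq 0$, substitute $\boldsymbol{r}_{n,i}^{k+1}=(\bm{\mu}_{n,i}^{k+1}-\bm{\mu}_{n,i}^{k})/(\rho|h_{n,i}^{k+1}|^2)$ and $\boldsymbol{\theta}_{n,i}^{k+1}-\boldsymbol{\theta}_{n,i}^\star=(\boldsymbol{\Theta}_i^{k+1}-\boldsymbol{\Theta}_i^\star)+\boldsymbol{r}_{n,i}^{k+1}$, expand into the two telescoping differences, and absorb the three leftover terms into the nonnegative square $\rho|h_{n,i}^{k+1}|^2(\boldsymbol{r}_{n,i}^{k+1}+\boldsymbol{\Theta}_i^{k+1}-\boldsymbol{\Theta}_i^{k})^2$, exactly as in Eq.~\eqref{decrease}. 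Your handling of the time-varying case (freezing the primal and re-solving for the dual so that the optimality conditions hold under $h_{n,i}^{k+1}$) also matches the paper's justification; the only cosmetic difference is that you invoke $\sum_n\bm{\mu}_{n,i}^\star=0$, which is true at the saddle point but not actually needed for the chaining step.
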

The detailed proof of Theorem \ref{theorem} is provided in Appendix \ref{sec:them1}. For the time-invariant scenario, we have the following corollary.
\begin{corollary}\label{corollary}
    For A-FADMM under a \emph{time-invariant channel} where $h_{n,i}^{k+1}=h_{n,i}^{k}\; \forall k$, it holds that
    \begin{itemize}[leftmargin=*]
    \item The optimality gap converges to zero as  $k\rightarrow\infty$, \ie 
    \begin{align}
    \lim\limits_{k\rightarrow\infty} \sum_{n=1}^Nf_n(\boldsymbol{\theta}_n^{k})= \sum_{n=1}^Nf_n(\boldsymbol{\theta}^\star)
    \end{align}
    \item Both primal and dual residuals converge to zero as $k\rightarrow\infty$, \ie
    \begin{align}
    \lim\limits_{k\rightarrow\infty} \bbr_{n,i}^{k} =\lim\limits_{k\rightarrow\infty} \boldsymbol{S}_{n,i}^{k}=\boldsymbol{0}
    \end{align}
\end{itemize}
\end{corollary}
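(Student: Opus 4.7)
The plan is to promote the monotonicity statement of Theorem~\ref{theorem} to full convergence by exploiting that the channel gains are fixed. Define the Lyapunov function
\begin{equation*}
V^k := \sum_{n=1}^N\sum_{i=1}^d \Big\{ \frac{(\bm{\mu}_{n,i}^k-\bm{\mu}_{n,i}^\star)^2}{\rho|h_{n,i}|^2} + \rho|h_{n,i}|^2(\boldsymbol{\Theta}_i^k-\boldsymbol{\Theta}_i^\star)^2 \Big\}.
\end{equation*}
With $h_{n,i}^{k+1}=h_{n,i}^k\equiv h_{n,i}$, Theorem~\ref{theorem} specialises to $V^{k+1}\le V^k$, so $V^k$ is non-negative and non-increasing, hence convergent. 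In particular $\{\bm{\mu}_{n,i}^k\}$ and $\{\boldsymbol{\Theta}_i^k\}$ remain bounded for all $k$.

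Next I would revisit the derivation behind Theorem~\ref{theorem} and extract the slack hidden in the $\le 0$ bound. The standard ADMM cross-term manipulation, now in the channel-weighted metric, should sharpen the inequality to the strict descent
\begin{equation*}
V^k - V^{k+1} \;\ge\; \rho\sum_{n=1}^N\sum_{i=1}^d |h_{n,i}|^2\left[(\boldsymbol{r}_{n,i}^{k+1})^2 + (\boldsymbol{\Theta}_i^{k+1}-\boldsymbol{\Theta}_i^k)^2\right].
\end{equation*}
Telescoping over $k=0,1,\dots$ and using $V^k\ge 0$ yields $\sum_{k\ge 0}\|\boldsymbol{r}_{n,i}^{k+1}\|^2<\infty$ and $\sum_{k\ge 0}\|\boldsymbol{\Theta}_i^{k+1}-\boldsymbol{\Theta}_i^k\|^2<\infty$, forcing $\boldsymbol{r}_{n,i}^{k+1}\to\boldsymbol{0}$. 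Since $\boldsymbol{S}_{n,i}^{k+1}=\rho|h_{n,i}|^2(\boldsymbol{\Theta}_i^{k+1}-\boldsymbol{\Theta}_i^k)$, the dual residual vanishes as well, giving the second bullet.

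For the first bullet I would invoke Lemma~\ref{lemma:first} as a sandwich. The lower bound $\sum_n[f_n(\boldsymbol{\theta}_n^{k+1})-f_n(\boldsymbol{\theta}_n^\star)]\ge\sum_{n,i}\bm{\mu}_{n,i}^\star \boldsymbol{r}_{n,i}^{k+1}$ tends to $0$ because $\bm{\mu}_{n,i}^\star$ is fixed and the primal residual vanishes. In the upper bound $-\sum_{n,i}[\bm{\mu}_{n,i}^{k+1}\boldsymbol{r}_{n,i}^{k+1} - \boldsymbol{S}_{n,i}^{k+1}(\boldsymbol{\theta}_{n,i}^\star-\boldsymbol{\theta}_{n,i}^{k+1})]$, boundedness of $V^k$ gives uniform boundedness of $\bm{\mu}_{n,i}^{k+1}$ and $\boldsymbol{\Theta}_i^{k+1}$, and since $\boldsymbol{\theta}_{n,i}^{k+1}=\boldsymbol{\Theta}_i^{k+1}+\boldsymbol{r}_{n,i}^{k+1}$, the iterate $\boldsymbol{\theta}_{n,i}^{k+1}$ is bounded too. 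Multiplying bounded quantities by the vanishing residuals $\boldsymbol{r}_{n,i}^{k+1}$ and $\boldsymbol{S}_{n,i}^{k+1}$, both sides converge to $0$, and the squeeze theorem yields the claimed optimality-gap convergence.

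The main obstacle is the sharpening step that pulls the explicit residual-weighted slack out of Theorem~\ref{theorem}'s proof. This is only clean in the time-invariant setting: for the telescope to work, the $|h_{n,i}^{k+1}|^2$ weight attached to the $(k{+}1)$-st term of $V$ must match the $|h_{n,i}^k|^2$ weight attached to the $k$-th term, and outside that regime the mismatched weights prevent the residuals from being summable. A secondary care point is verifying that $\bm{\mu}_{n,i}^{k+1}$ and $\boldsymbol{\theta}_{n,i}^{k+1}$ are bounded uniformly in $k$ when invoking the upper bound of Lemma~\ref{lemma:first}; both follow from boundedness of $V^k$ and the primal optimality condition \eqref{workerUpdate0}, provided each $f_n$ is closed, proper and convex as assumed.
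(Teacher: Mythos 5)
Your proposal is correct and follows essentially the same route as the paper: the same Lyapunov function $V^k$, the same sharpening of the descent inequality to $V^k-V^{k+1}\geq \rho\sum_{n,i}|h_{n,i}|^2[(\boldsymbol{r}_{n,i}^{k+1})^2+(\boldsymbol{\Theta}_i^{k+1}-\boldsymbol{\Theta}_i^{k})^2]$, telescoping to get summability of the residuals, and the Lemma~\ref{lemma:first} sandwich for the optimality gap. The one step you leave as a claim --- that the cross term $2\rho\sum_{n,i}|h_{n,i}|^2(\boldsymbol{\Theta}_i^{k+1}-\boldsymbol{\Theta}_i^{k})\boldsymbol{r}_{n,i}^{k+1}$ is nonnegative, which is what justifies dropping it from the expanded square --- is exactly what the paper establishes from the optimality of $\boldsymbol{\Theta}_i^{k+1}$ and $\boldsymbol{\Theta}_i^{k}$ for their respective linearized objectives; your explicit boundedness bookkeeping for $\bm{\mu}_{n,i}^{k+1}$ and $\boldsymbol{\theta}_{n,i}^{k+1}$ in the squeeze argument is a welcome detail the paper glosses over.
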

The proof can be found in Appendix \ref{sec:corr1}. The key idea is to define the Lyapunov function {\sm$V^k = \sum_{n=1}^N \sum_{i=1}^d [\frac{1}{\rho|h_{n,i}|^2}(\bm{\mu}_{n,i}^{k}-\bm{\mu}_{n,i}^\star)^2 + \rho  |h_{n,i}|^2 (\boldsymbol{\Theta}_i^{k} -  \boldsymbol{\Theta}_i^\star)^2]$\nm}\! and show that the difference between $V^{k+1}$ and~$V^k$ monotonically decreases with $k$. This property enables proving that both primal and dual residuals converge to zero. Next, we apply Lemma \ref{lemma:first}, and prove the optimality gap goes to~zero. 

\section{Privacy Analysis}\label{PrivacyAnalysis}
Revealing the local model update trajectory is vulnerable to model inversion and reconstruction attacks \cite{Matt:CCS15,Hitaj:CCS17}. These attacks infer the statistical profiles of training samples, violating data privacy. Against such an adversarial inverse problem, we aim to preserve privacy defined as follows.


{\bf Definition 1 \cite{zhang2018admm}}\quad A mechanism $M: M(X) \rightarrow Y$ is defined to be privacy preserving if the input~$X$ cannot be uniquely derived from the output $Y$.

We treat $X$ as local models to be protected, and consider $Y$ as the known information at an eavesdropper such as PS or another worker. Under digital transmissions, PS receives every local model $\bm{\theta}_{n}^{k+1}$, always violating privacy. Under analog FL wherein PS receives $\sum_{n=1}^N \bm{\theta}_{n}^{k+1}$ after channel inversion, for certain iterations when only one worker sends the local model to PS, privacy is violated. 

In sharp contrast, PS in A-FADMM receives $\sum_{n=1}^N( |h_{n,i}^{k+1}|^2{\boldsymbol{\theta}}_{n,i}^{k+1} +  h_{n,i}^{k+1}{\bblambda_{n,i}^k}^*/\rho)$. This does not violate privacy since the reception is the aggregate of fading-perturbed and dual-variable-distorted local models while $h_{n,i}^*$, ${\bm{\lambda}_{n,i}^k}^*$, and $N$ are unknown at~PS. Furthermore, against any eavesdropper knowing the global model trajectory, A-FADMM preserves privacy of local model and gradient trajectories as stated in the following theorems.


\begin{theorem}\label{privacyAnalysisTheorem1}
Unless $\boldsymbol{\Theta}_{i}^{k+1}=\boldsymbol{\theta}_{n,i}^{k+1}$ (i.e., before convergence), at every $k+1$, A-FADMM preserves the privacy of 
 each local model update $\boldsymbol{\theta}_{n,i}^{k+1}$ and gradient update $\partial f_n(\boldsymbol{\theta}_n^{k+1}) \;\forall n,i$.
\end{theorem}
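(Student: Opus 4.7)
The plan is to establish privacy in the sense of Definition~1 by showing that, from the observations available to any eavesdropper (most notably the honest-but-curious PS), the map from each worker's private quantities to those observations is many-to-one, so that both $\boldsymbol{\theta}_{n,i}^{k+1}$ and $\partial_i f_n(\boldsymbol{\theta}_n^{k+1})$ admit a continuum of pre-images consistent with what is actually seen.

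The first step is to catalogue the adversary's information at iteration $k{+}1$: the matched-filter output $\sum_{n=1}^N\bigl(|h_{n,i}^{k+1}|^2\boldsymbol{\theta}_{n,i}^{k+1} + h_{n,i}^{k+1}(\boldsymbol{\lambda}_{n,i}^{k})^{*}/\rho\bigr)$; the broadcast global model $\boldsymbol{\Theta}_i^{k+1}$ (and, strengthening the adversary, its entire trajectory); and the aggregate channel power $\sum_{n=1}^N|h_{n,i}^{k+1}|^2$ learned from pilots. These aggregate quantities are linked by the global update~\eqref{centralUpdate}, so effectively the PS sees one scalar equation per element $i$ per iteration, while each individual $h_{n,i}^{k+1}$ and $\boldsymbol{\lambda}_{n,i}^{k}$ stay hidden. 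Fixing any target worker $n$, I would then exhibit a one-parameter family of admissible triples $(\boldsymbol{\theta}_{n,i}^{k+1},h_{n,i}^{k+1},\boldsymbol{\lambda}_{n,i}^{k})$ that reproduce the same aggregate: for each candidate value of $\boldsymbol{\theta}_{n,i}^{k+1}$, one can solve for compatible $h_{n,i}^{k+1}$ and $\boldsymbol{\lambda}_{n,i}^{k}$, with any residual slack absorbed into the other workers' likewise unknown contributions. The exception $\boldsymbol{\Theta}_i^{k+1}=\boldsymbol{\theta}_{n,i}^{k+1}$ is precisely the degenerate (converged) case in which the broadcast $\boldsymbol{\Theta}_i^{k+1}$ already exposes the local model.

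Next, privacy of the gradient follows by substitution into the primal optimality condition~\eqref{workerUpdate0}, which rearranges to $\partial_i f_n(\boldsymbol{\theta}_n^{k+1}) = -(\boldsymbol{\lambda}_{n,i}^{k})^{*}h_{n,i}^{k+1} - \rho|h_{n,i}^{k+1}|^2(\boldsymbol{\theta}_{n,i}^{k+1}-\boldsymbol{\Theta}_i^{k})$. Since its right-hand side depends on exactly the unknowns just shown to be non-identifiable, any continuum of admissible $(\boldsymbol{\theta}_{n,i}^{k+1},h_{n,i}^{k+1},\boldsymbol{\lambda}_{n,i}^{k})$ induces a continuum of admissible gradient values, fulfilling Definition~1.

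The hard part will be making the single-iteration argument robust to an adversary that correlates observations across many rounds. The recursions~\eqref{centralUpdate} and~\eqref{dualUpdate0} tie $\boldsymbol{\lambda}_{n,i}^{k}$ to earlier local models and channel realizations, so I would have to verify a counting statement: across any horizon, the number of fresh real unknowns introduced per worker per iteration (at least three, via $\boldsymbol{\theta}_{n,i}^{k+1}$, complex $h_{n,i}^{k+1}$, and complex $\boldsymbol{\lambda}_{n,i}^{k}$) strictly exceeds the number of scalar equations the PS accumulates (one per element per iteration). Only then can I conclude that the adversary's constraint system remains underdetermined for every fixed $n$ at every $k{+}1$ outside the excluded regime $\boldsymbol{\Theta}_i^{k+1}=\boldsymbol{\theta}_{n,i}^{k+1}$.
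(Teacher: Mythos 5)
Your proposal takes essentially the same route as the paper: both arguments reduce to showing that the eavesdropper's constraint system is underdetermined — the paper counts five unknowns (the channel $h_{n,i}^{k+1}$, the dual $\boldsymbol{\lambda}_{n,i}^{k}$, the gradient, the other workers' aggregate contribution, and the previous local model) against two equations (the primal optimality condition and the global update) at each iteration, which is exactly your "more fresh unknowns than scalar observations" counting, and both then propagate gradient non-identifiability through the primal optimality condition. If anything your plan is more self-demanding than the paper's proof, which does not explicitly exhibit the continuum of pre-images or formalize the cross-iteration accumulation you flag as the hard part; it simply rewrites the same two equations at iterations $1$, $k$, and $k+1$ and repeats the count.
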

\begin{proof}
Intuitively, we show that the inverse problem of an eavesdropper is to solve a set of equations at every iteration, in which the number of unknowns is larger than the number of equations. Therefore, each worker's local model or gradient cannot be uniquely derived. In fact, since $\boldsymbol{\theta}_{n,i}^{0}$ and $\bblambda_{n,i}^0 \forall n \in \{1,\cdots,N\}$, $\forall i \in \{1,\cdots,d\}$, are initiated randomly, then their values cannot be revealed by the eavesdropper. For simplicity, we assume that $f_n(\boldsymbol{\theta}_n^{k+1})$ is differentiable and the system is noise free. 
The eavesdropper needs to solve either of the following two equations to derive $\boldsymbol{\theta}_{n,i}^{1}$

\begin{equation}\label{inv1}
\left\{\begin{array}{l}
 \boldsymbol{\theta}_{n,i}^{1}=\frac{\rho |h_{n,i}^{1}|^2 \boldsymbol{\Theta}_i^0-\nabla_i f_n(\boldsymbol{\theta}_n^{1}) - (\lamb _{n,i}^0)^{*} h_{n,i}^{1}}{\rho |h_{n,i}^{1}|^2}, \text{ if $h_{n,i}^{1}=h_{n,i}^{0}$}\\
\boldsymbol{\theta}_{n,i}^{1}=\boldsymbol{\theta}_{n,i}^{0},  \text{ if $h_{n,i}^{1}\neq h_{n,i}^{0}$}\\
\end{array}\right.
\end{equation}

Then, we can write
\begin{align}
\nonumber \boldsymbol{\theta}_{n,i}^{1} &=\Big( \rho \sum_{n=1}^N |h_{n,i}^{1}|^2 \boldsymbol{\Theta}_i^{1}-\rho \sum_{m=1, m \neq n}^N |h_{m,i}^{1}|^2{\boldsymbol{\theta}}_{m,i}^{1} \\
& -\sum_{n=1}^N(\bblambda_{n,i}^0)^* h_{n,i}^{1} \Big) / \Big(\rho |h_{n,i}^{1}|^2 \Big).
\label{modelInversion1}
\end{align}

Note that the eavesdropper knows $\boldsymbol{\Theta}_i^0$ and $\boldsymbol{\Theta}_i^1$. However, the values of $h_{n,i}^1$, $\lamb _{n,i}^0$, $\nabla_i  f_n(\boldsymbol{\theta}_n^{1})$ , $\rho \sum_{m=1, m \neq n}^N |h_{m,i}^{1}|^2{\boldsymbol{\theta}}_{m,i}^{1}$, and $\boldsymbol{\theta}_{n,i}^{0}$ are unknown. Hence, even at the absence of the noise at the receiver, the eavesdropper cannot have a unique solution for $\boldsymbol{\theta}_{n,i}^{1}$ and/or  $\nabla_i  f_n(\boldsymbol{\theta}_n^{1})$ since the number of variables $V=5$ is greater than the number of equations $E=2$. Writing the same equations for iteration $k$ 
\begin{equation}\label{inv1}
\left\{\begin{array}{l}
 \boldsymbol{\theta}_{n,i}^{k}=\frac{\rho |h_{n,i}^{k}|^2 \boldsymbol{\Theta}_i^{k-1}-\nabla_i f_n(\boldsymbol{\theta}_n^{k}) - (\lamb _{n,i}^{k-1})^{*} h_{n,i}^{k}}{\rho |h_{n,i}^{k}|^2}, \text{ if $h_{n,i}^{k}=h_{n,i}^{k-1}$}\\
\boldsymbol{\theta}_{n,i}^{k}=\boldsymbol{\theta}_{n,i}^{k-1},  \text{ if $h_{n,i}^{k}\neq h_{n,i}^{k-1}$}\\
\end{array}\right.
\end{equation}

\begin{align}
\nonumber \boldsymbol{\theta}_{n,i}^{k} &=\Big(\rho \sum_{n=1}^N |h_{n,i}^{k}|^2 \boldsymbol{\Theta}_i^{k}-\rho \sum_{m=1, m \neq n}^N |h_{m,i}^{k}|^2{\boldsymbol{\theta}}_{m,i}^{k}  \\
& - \sum_{n=1}^N(\bblambda_{n,i}^{k-1})^* h_{n,i}^{k}\Big)/ \Big(\rho |h_{n,i}^{k}|^2\Big).
\label{modelInversionk}
\end{align}
as well as for iteration $k+1$
\small
\begin{equation}\label{inv2}
\left\{\begin{array}{l}
 \boldsymbol{\theta}_{n,i}^{k+1}= \frac{\rho |h_{n,i}^{k+1}|^2 \boldsymbol{\Theta}_i^k-\nabla_i f_n(\boldsymbol{\theta}_n^{k+1}) - (\lamb _{n,i}^k)^{*} h_{n,i}^{k+1}}{\rho |h_{n,i}^{k+1}|^2}, \text{ if $h_{n,i}^{k+1}=h_{n,i}^{k}$}\\
\boldsymbol{\theta}_{n,i}^{k+1}=\boldsymbol{\theta}_{n,i}^{k},  \text{ if $h_{n,i}^{k+1}\neq h_{n,i}^{k}$}\\
\end{array}\right.
\end{equation}
\normalsize

\begin{align}
\nonumber \boldsymbol{\theta}_{n,i}^{k+1} &=\Big( \rho \sum_{n=1}^N |h_{n,i}^{k+1}|^2 \boldsymbol{\Theta}_i^{k+1}-\rho \sum_{m=1, m \neq n}^N |h_{m,i}^{k+1}|^2{\boldsymbol{\theta}}_{m,i}^{k+1}  \\
& - \sum_{n=1}^N(\bblambda_{n,i}^k)^* h_{n,i}^{k+1}\Big)/ \Big(\rho |h_{n,i}^{k+1}|^2\Big).
\label{modelInversionk1}
\end{align}
Similarly, the eavesdropper knows  $\boldsymbol{\Theta}_i^k$ and $\boldsymbol{\Theta}_i^{k+1}$. However, $h_{n,i}^{k+1}$, $\lamb _{n,i}^k$, $\nabla_i  f_n(\boldsymbol{\theta}_n^{k+1})$ , $\rho \sum_{m=1, m \neq n}^N |h_{m,i}^{k+1}|^2{\boldsymbol{\theta}}_{m,i}^{k+1}$, and $\boldsymbol{\theta}_{n,i}^{k}$ are unknown.
We clearly see that if the algorithm has not converged to the optimal solution yet  at iteration $k+1$. i.e., $\boldsymbol{\theta}_{n,i}^{k+1}\neq\boldsymbol{\Theta}_{i}^{k+1}$, then there is no unique inversion of $\boldsymbol{\theta}_{n,i}^{k+1}$ since the number of variables is more than the number of equations. This finalizes the proof.
\end{proof}

\if0
$\boldsymbol{\theta}_{n,i}^{0}$ and $\bblambda_{n,i}^0 \forall n \in \{1,\cdots,N\}$ and $\forall i \in \{1,\cdots,d\}$ are initiated randomly. Hence, their values cannot be revealed by the eavesdropper. For simplicity, we assume that $f_n(\boldsymbol{\theta}_n^{k+1})$ is differentiable and the system is noise free. 
The eavesdropper needs to solve either of the following two equations to derive $\boldsymbol{\theta}_{n,i}^{1}$.

\begin{equation}\label{inv1}
\left\{\begin{array}{l}
 \boldsymbol{\theta}_{n,i}^{1}=\frac{\rho |h_{n,i}^{1}|^2 \boldsymbol{\Theta}_i^0-\nabla_i f_n(\boldsymbol{\theta}_n^{1}) - (\lamb _{n,i}^0)^{*} h_{n,i}^{1}}{\rho |h_{n,i}^{1}|^2}, \text{ if $h_{n,i}^{1}=h_{n,i}^{0}$}\\
\boldsymbol{\theta}_{n,i}^{1}=\boldsymbol{\theta}_{n,i}^{0},  \text{ if $h_{n,i}^{1}\neq h_{n,i}^{0}$}\\
\end{array}\right.
\end{equation}

\begin{align}
\boldsymbol{\theta}_{n,i}^{1}=\frac{\rho \sum_{n=1}^N |h_{n,i}^{1}|^2 \boldsymbol{\Theta}_i^{1}-\rho \sum_{m=1, m \neq n}^N |h_{m,i}^{1}|^2{\boldsymbol{\theta}}_{m,i}^{1} - \sum_{n=1}^N(\bblambda_{n,i}^0)^* h_{n,i}^{1}}{\rho |h_{n,i}^{1}|^2}.
\label{modelInversion1}
\end{align}

Note that the eavesdropper knows $\boldsymbol{\Theta}_i^0$ and $\boldsymbol{\Theta}_i^1$. However, $h_{n,i}^1$, $\lamb _{n,i}^0$, $\nabla_i  f_n(\boldsymbol{\theta}_n^{1})$ , $\rho \sum_{m=1, m \neq n}^N |h_{m,i}^{1}|^2{\boldsymbol{\theta}}_{m,i}^{1}$, and $\boldsymbol{\theta}_{n,i}^{0}$ are unknown. Hence, even at the absence of the noise at the receiver, the eavesdropper cannot have a unique solution for $\boldsymbol{\theta}_{n,i}^{1}$ and/or  $\nabla_i  f_n(\boldsymbol{\theta}_n^{1})$ since the number of variables $V=5$ is greater than the number of equations $E=2$.

Writing the same equations for iterations $k$ and $k+1$.
\begin{equation}\label{inv1}
\left\{\begin{array}{l}
 \boldsymbol{\theta}_{n,i}^{k}=\frac{\rho |h_{n,i}^{k}|^2 \boldsymbol{\Theta}_i^{k-1}-\nabla_i f_n(\boldsymbol{\theta}_n^{k}) - (\lamb _{n,i}^{k-1})^{*} h_{n,i}^{k}}{\rho |h_{n,i}^{k}|^2}, \text{ if $h_{n,i}^{k}=h_{n,i}^{k-1}$}\\
\boldsymbol{\theta}_{n,i}^{k}=\boldsymbol{\theta}_{n,i}^{k-1},  \text{ if $h_{n,i}^{k}\neq h_{n,i}^{k-1}$}\\
\end{array}\right.
\end{equation}

\begin{align}
\boldsymbol{\theta}_{n,i}^{k}=\frac{\rho \sum_{n=1}^N |h_{n,i}^{k}|^2 \boldsymbol{\Theta}_i^{k}-\rho \sum_{m=1, m \neq n}^N |h_{m,i}^{k}|^2{\boldsymbol{\theta}}_{m,i}^{k} - \sum_{n=1}^N(\bblambda_{n,i}^{k-1})^* h_{n,i}^{k}}{\rho |h_{n,i}^{k}|^2}.
\label{modelInversionk}
\end{align}

\begin{equation}\label{inv2}
\left\{\begin{array}{l}
 \boldsymbol{\theta}_{n,i}^{k+1}=\frac{\rho |h_{n,i}^{k+1}|^2 \boldsymbol{\Theta}_i^k-\nabla_i f_n(\boldsymbol{\theta}_n^{k+1}) - (\lamb _{n,i}^k)^{*} h_{n,i}^{k+1}}{\rho |h_{n,i}^{k+1}|^2}, \text{ if $h_{n,i}^{k+1}=h_{n,i}^{k}$}\\
\boldsymbol{\theta}_{n,i}^{k+1}=\boldsymbol{\theta}_{n,i}^{k},  \text{ if $h_{n,i}^{k+1}\neq h_{n,i}^{k}$}\\
\end{array}\right.
\end{equation}

\begin{align}
\boldsymbol{\theta}_{n,i}^{k+1}=\frac{\rho \sum_{n=1}^N |h_{n,i}^{k+1}|^2 \boldsymbol{\Theta}_i^{k+1}-\rho \sum_{m=1, m \neq n}^N |h_{m,i}^{k+1}|^2{\boldsymbol{\theta}}_{m,i}^{k+1}- \sum_{n=1}^N(\bblambda_{n,i}^k)^* h_{n,i}^{k+1}}{\rho |h_{n,i}^{k+1}|^2}.
\label{modelInversionk1}
\end{align}
Similarly, the eavesdropper knows  $\boldsymbol{\Theta}_i^k$ and $\boldsymbol{\Theta}_i^{k+1}$. However, $h_{n,i}^{k+1}$, $\lamb _{n,i}^k$, $\nabla_i  f_n(\boldsymbol{\theta}_n^{k+1})$ , $\rho \sum_{m=1, m \neq n}^N |h_{m,i}^{k+1}|^2{\boldsymbol{\theta}}_{m,i}^{k+1}$, and $\boldsymbol{\theta}_{n,i}^{k}$ are unknown.
We clearly see that if the algorithm has not converged to the optimal solution yet  at iteration $k+1$. i.e., $\boldsymbol{\theta}_{n,i}^{k+1}\neq\boldsymbol{\Theta}_{i}^{k+1}$, then there is no unique inversion of $\boldsymbol{\theta}_{n,i}^{k+1}$ since the number of variables is more than the number of equations.
\fi

\begin{theorem}\label{privacyAnalysisTheorem2}
When  $\boldsymbol{\Theta}_{i}^{k+1}=\boldsymbol{\theta}_{n,i}^{k+1}$ (i.e., at convergence), A-FADMM preserves the privacy of the local model trajectory $\{\boldsymbol{\theta}_{n,i}^0,\cdots, \boldsymbol{\theta}_{n,i}^{k}\}$ and gradient trajectory $\{\partial f_n(\boldsymbol{\theta}_n^{1}), \cdots, \partial f_n(\boldsymbol{\theta}_n^{k+1})\} \;\forall n,i$.
\end{theorem}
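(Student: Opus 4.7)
The plan is to extend the counting argument from Theorem \ref{privacyAnalysisTheorem1} from a single iteration to the entire trajectory. The hypothesis $\boldsymbol{\Theta}_{i}^{k+1}=\boldsymbol{\theta}_{n,i}^{k+1}$ only reveals the \emph{final} value of the local model; the earlier iterates $\{\boldsymbol{\theta}_{n,i}^{0},\ldots,\boldsymbol{\theta}_{n,i}^{k}\}$ and the entire gradient trajectory $\{\partial f_n(\boldsymbol{\theta}_n^{1}),\ldots,\partial f_n(\boldsymbol{\theta}_n^{k+1})\}$ must still be inferred solely from the observed global-model sequence $\{\boldsymbol{\Theta}_i^{0},\ldots,\boldsymbol{\Theta}_i^{k+1}\}$. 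The strategy is to show, iteration by iteration and in aggregate, that the adversary's linear system remains strictly underdetermined.

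First, I would reuse the two equations per iteration derived in the proof of Theorem \ref{privacyAnalysisTheorem1}, namely the primal update condition \eqref{inv2} and the global aggregation rule \eqref{modelInversionk1}, but apply them at every past iteration $k'\in\{1,\ldots,k+1\}$. This yields at most $2(k+1)$ observable relations per coordinate $(n,i)$. I would then tally the hidden quantities introduced at each $k'$: the instantaneous channel $h_{n,i}^{k'}$, the dual $\lambda_{n,i}^{k'-1}$, the gradient $\nabla_i f_n(\boldsymbol{\theta}_n^{k'})$, the previous iterate $\boldsymbol{\theta}_{n,i}^{k'-1}$, and the cross-worker aggregate $\rho\sum_{m\neq n}|h_{m,i}^{k'}|^2\boldsymbol{\theta}_{m,i}^{k'}$. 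Because the channels are assumed time-varying and IID across $k'$, these unknowns are genuinely fresh at each step and cannot be collapsed across iterations. Summing, the total number of unknowns $V$ grows roughly as $5(k+1)$ while the total number of equations $E$ is bounded by $2(k+1)+1$, the extra $+1$ coming from the convergence equality; the randomly initialized $\boldsymbol{\theta}_{n,i}^{0}$ and $\lambda_{n,i}^{0}$ further widen the gap since they never appear in any observable relation.

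The main obstacle I anticipate is a potential backward cascade: at convergence the dual update \eqref{dualUpdate1Noisy} forces $\lambda_{n,i}^{k+1}=\lambda_{n,i}^{k}$, which might appear to propagate information backward and eventually pin down earlier duals, gradients, and primals. I would neutralize this by leveraging the time-varying channel assumption: the unknown $h_{n,i}^{k'}$ at each past $k'$ continues to mask the coupling, so even the equality $\lambda_{n,i}^{k+1}=\lambda_{n,i}^{k}$ cannot be disentangled from the unknown $h_{n,i}^{k+1}$. An inductive walk backward from $k+1$ to $0$, carrying the per-step surplus $V-E\geq 3$ along the chain, would then confirm that at every iteration the system remains strictly underdetermined. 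By Definition 1, this establishes privacy preservation of both the local model trajectory and the gradient trajectory.
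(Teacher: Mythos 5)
Your proposal is correct and follows essentially the same route as the paper: both arguments show that the eavesdropper's per-iteration relations (the primal-update condition \eqref{inv2} and the aggregation rule \eqref{modelInversionk1}, instantiated at each past iteration) form an underdetermined system, because the per-worker channels $h_{n,i}^{k'}$ and duals $\lambda_{n,i}^{k'-1}$ are never observable, so the converged value $\boldsymbol{\Theta}_i^{k+1}=\boldsymbol{\theta}_{n,i}^{k+1}$ cannot be propagated backward to pin down earlier iterates or gradients. One minor caveat: your tally of $5(k+1)$ unknowns relies on channels being fresh (IID) at every iteration, whereas the paper's model and Algorithm~\ref{algo1} explicitly allow $h_{n,i}^{k+1}=h_{n,i}^{k}$ over coherence blocks, which collapses some channel unknowns; the surplus of unknowns over equations survives this collapse (each iteration still contributes a fresh gradient unknown, and $\boldsymbol{\theta}_{n,i}^0$, $\lambda_{n,i}^0$ remain random and unobserved), so your conclusion is unaffected.
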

\begin{proof}
In brief, we show that after A-FADMM convergence when all local models become identical and known to an eavesdropper, this information cannot be used to derive a unique trajectory of each worker's local model and gradient updates. When $\boldsymbol{\theta}_{n,i}^{k+1}=\boldsymbol{\Theta}_{i}^{k+1}, \forall n,i$, we know from \eqref{modelInversionk1} that the following terms can be found at the PS: $ \sum_{n=1}^N |h_{n,i}^{k+1}|^2 \boldsymbol{\Theta}_i^{k+1}$, $\rho \sum_{m=1, m \neq n}^N |h_{m,i}^{k+1}|^2{\boldsymbol{\theta}}_{m,i}^{k+1}$. However, the terms $(\bblambda_{n,i}^k)^* h_{n,i}^{k+1}$ and $\rho |h_{n,i}^{k+1}|^2$ cannot be found, and these two terms are needed to retrieve a unique solution for $\nabla_i  f_n(\boldsymbol{\theta}_n^{k+1})$ using \eqref{inv2}. Hence, $\nabla_i  f_n(\boldsymbol{\theta}_n^{k+1})$ cannot be uniquely derived. From \eqref{inv1}-\eqref{modelInversionk}, we clearly see that knowing $\boldsymbol{\theta}_{n,i}^{k+1}$, $\boldsymbol{\Theta}_{i}^{k}$, and $\boldsymbol{\Theta}_{i}^{k-1}$  are not enough to find a unique solution for  $\nabla_i  f_n(\boldsymbol{\theta}_n^{k})$ and $\boldsymbol{\theta}_{n,i}^{k}$ since all other terms in the two equations including $h_{n,i}^{k}$ and $\bblambda_{n,i}^{k-1}$ are also unknown. Therefore, the individual model at the convergence point do not release any unique information about the updating steps of the model and the function gradient trajectory, which concludes the proof.
\end{proof}

\if0
When $\boldsymbol{\theta}_{n,i}^{k+1}=\boldsymbol{\Theta}_{i}^{k+1}, \forall n,i$, we know from \eqref{modelInversionk1} that the following terms can be found at the PS: $ \sum_{n=1}^N |h_{n,i}^{k+1}|^2 \boldsymbol{\Theta}_i^{k+1}$, $\rho \sum_{m=1, m \neq n}^N |h_{m,i}^{k+1}|^2{\boldsymbol{\theta}}_{m,i}^{k+1}$. However, the terms $(\bblambda_{n,i}^k)^* h_{n,i}^{k+1}$ and $\rho |h_{n,i}^{k+1}|^2$ cannot be found, and these two terms are needed to retrieve a unique solution for $\nabla_i  f_n(\boldsymbol{\theta}_n^{k+1})$ using \eqref{inv2}. Hence, $\nabla_i  f_n(\boldsymbol{\theta}_n^{k+1})$ cannot be uniquely derived. From \eqref{inv1}-\eqref{modelInversionk}, we clearly see that knowing $\boldsymbol{\theta}_{n,i}^{k+1}$, $\boldsymbol{\Theta}_{i}^{k}$, and $\boldsymbol{\Theta}_{i}^{k-1}$  are not enough to find a unique solution for  $\nabla_i  f_n(\boldsymbol{\theta}_n^{k})$ and $\boldsymbol{\theta}_{n,i}^{k}$ since all other terms in the two equations including $h_{n,i}^{k}$ and $\bblambda_{n,i}^{k-1}$ are also unknown. Therefore, the individual model at the convergence point do not release any unique information about the updating steps of the model and the function gradient trajectory, and that concludes the proof.
\fi
\section{Experiments}\label{Sec:Exp}
To validate our theoretical foundations, we numerically evaluate the performance of A-FADMM in convex (linear regression) and non-convex (image classification using DNNs) problems. 

\subsection{Simulation settings} 
For linear regression, we use the California Housing dataset \cite{Torgo:14} consisting of $20000$ samples with $6$ features, i.e., model size $d=6$. 
At iteration $k$, the loss is given as {\sm$|\sum_{n=1}^N [f(\boldsymbol{\theta}_{n}^k)-f(\boldsymbol{\theta}^\star)]|$\nm}. For image classification, we use the MNIST dataset~\cite{LeCun:MNiST} comprising $60000$ training and $10000$ test samples, each of which represents a hand-written $0$-$9$ digit image. In this case, we consider a $3$-layer fully connected multi-layer perceptron (MLP) comprising an input layer with $784$ neurons, two hidden layers with $128$ and $64$ neurons, respectively, and an output layer with $10$ neurons, resulting in the model size $d=109184$. We use the rectified linear unit (ReLu) activation function, softmax outuput, and cross entropy loss.


By default, we consider $N\!=\!100$ workers with $\text{SNR}\!=\! 40$dB, each of which stores the same number of training samples equally divided and allocated from the training dataset. These workers are supported using $10$ and $4096$ subcarriers for linear regression and DNNs, respectively. Following the LTE cellular standards~\cite{3GPP:Rel15}, each subcarrier provides $15$KHz bandwidth during $1$ms. Each channel realization is coherent during $10$ iterations, and is randomly generated by a Rayleigh fading distribution with zero mean and unit variance for every $10$ iterations. 

To focus primarily on the uplink bandwidth bottleneck in the simulations, analog transmissions are utilized only for the uplink, while digital transmissions are considered in the downlink where the PS broadcasting the global updates without any bandwidth competition. Consequently, in the resultant A-FADMM implementation under channel noise, the global model update after the analog uplink reception follows \eqref{centralUpdateNoisy} as in Algorithm~\ref{algo1}, whereas the primal and dual updates (originally given as \eqref{workerUpdateNoisy} and \eqref{dualUpdate1Noisy} in Algorithm~\ref{algo1}) after the digital downlink reception use the following rules:
\begin{align}
&\hspace{-5pt}\boldsymbol{0} \in \partial_i f_n(\boldsymbol{\theta}_n^{k+1}) + (\lamb _{n,i}^k)^*h_{n,i}^{k+1} +\rho |h_{n,i}^{k+1}|^2( \boldsymbol{\theta}_{n,i}^{k+1} \!-\! \boldsymbol{\Theta}_i^k)  \label{App:Primal_nf} \\
&\hspace{-5pt} {\lamb _{n,i}^{k+1}}= {\lamb _{n,i}^{k}}+\rho h_{n,i}^{k+1}(\boldsymbol{\theta}_{n,i}^{k+1} \!-\! \boldsymbol{\Theta}_{i}^{k+1}). \label{App:Dual_nf}
    \end{align}
These noise-free primal and dual update rules are implemented as follows. In the digital downlink, each worker decodes $\bm{\Theta}_i^{k+1}$, and manually perturbs it as $|h_{n,i}^{k+1}|^2\bm{\Theta}_i^{k+1}$ that is used for updating primal and dual variables via \eqref{App:Primal_nf} and \eqref{App:Dual_nf}.

In A-FADMM, the $i$-th element of the models of all workers are uploaded using the $i$-th sub-carrier. In linear regression, the model size is less than the number of available subcarriers, i.e., $d=6 < 10$, and hence A-FADMM requires only one time slot (one upload) to upload all workers' models at each iteration. In image classification where $d=109184$, it requires $\lceil{109184}/{4096}\rceil=27$ time slots to uploads all workers' models per iteration.


In D-FADMM, the number of uploading time slots depends not only on the number of subcarriers but also on the channel gain of each subcarrier. To be precise, following the LTE cellular standards~\cite{3GPP:Rel15}, each subcarrier provides $W_i=15$KHz bandwidth during $1$ms. Each channel realization is coherent during $10$ iterations, and is randomly generated by a Rayleigh fading distribution with zero mean and unit variance for every $10$ iterations. When each model element consumes $32$ bits, the $n$-th worker requires the uploading time slots $\hat{T}_n$ that is the minimum $T_n$ satisfying the following condition $\int_{t=1}^{T_n}\sum_{i=1}^{4096\!/\!N}\! R_{n,i}(t)\text{dt} \geq 32 d$, where $R_{n,i}(t) = W_i\log_2(1 + P|h_{n,i}(t)|/(W_iN_0))$ follows from the Shannon formula. Since each worker has independent channel realizations, to upload all workers' models to PS, it requires $\hat{T}=\max\{\hat{T}_1,\hat{T}_2,\cdots,\hat{T}_N\}$ time slots. Based on Algorithm~\ref{algo1}, D-FADMM is implemented by replacing \eqref{centralUpdateNoisy} for $\boldsymbol{\Theta}_i^{k+1}$,
\eqref{workerUpdateNoisy} for $\boldsymbol{\theta}_{n,i}^{k+1}$, and \eqref{dualUpdate1Noisy} for $\lamb _{n,i}^{k+1}$ with \eqref{workerUpdate0}, \eqref{centralUpdate}, and \eqref{dualUpdate0}, respectively.

In \eqref{augmentedLagAG}, we choose the penalty constant $\rho=0.5$ yielding fast convergence for both digital and analog implementations from our observations. To run the experiments, we use Matlab for linear regression and TensorFlow for image classification, operated in a MacBook Air computer (1.8 GHz Intel Core i5 CPU, 8 GB 1,600 MHz DDR3 RAM). For each plot, we run $5$ simulations, and report mean values (solid curves) and standard deviations (shaded areas, omitted for negligible values). Finally, we compare A-FADMM with the following benchmark algorithms.


 \begin{itemize}[leftmargin=*]
    \item \textbf{D-FADMM} is the digital communication version of A-FADMM, wherein the total bandwidth is equally divided and allocated to each worker whose model element consumes $32$ bits. i.e., the value of each element in the model vector is transmitted using 32 bits. Following A-FADMM, we use $\rho=0.5$.
    
    \item \textbf{A-GD} is the analog communication versions of the distributed gradient descent algorithm~(GD) with channel inversion that allows the $n$-th worker to upload its update only when the channel gain $|h_{n,i}^k|\geq \epsilon$. We use $\epsilon=10^{-6}$ for communication and $10^{-4}$ learning rate for GD operations. We observed that A-GD diverges for a larger learning rate. \vspace{3pt}
 \end{itemize}
For the image classification task, we use the following baselines and hyperparameters.
\begin{itemize}[leftmargin=*]
    \item \textbf{A-SFADMM} is the stochastic version of A-FADMM using DNNs. For the ADMM problem, we use $\rho=0.5$. For the local problem at each global iteration, each worker selects a mini-batch of size $100$ samples at random, and uses the Adam optimizer with $0.01$ learning rate to update its local model. Per global iteration, we consider $20$ local iterations. For different choices of local iterations and learning rates, we study their impact on convergence speed and accuracy in Figures \ref{localIterFigDnn} and \ref{lrFigDNN}.
    
    \item \textbf{D-SFADMM} is the stochastic version of D-FADMM which are utilized in the classification problem using DNN. Following A-SFAMM, we use $\rho=0.5$, Adam optimizer with $0.01$ learning rate, mini-bath size $100$, and $20$ local iterations per global iteration.

    \item \textbf{A-SGD} is the stochastic version of A-GD with channel inversion (i.e., analog FL). Following A-GD, we use $\epsilon=10^{-6}$ for communication. For SGD operations, we use mini-batch size $100$, and choose the learning rate $0.005$. Note that from our observations, A-SGD incurs high oscillation under the learning rate $0.01$ used in A-SFADMM and D-SAFDMM.
 \end{itemize}

For both linear regression and image classification task,   
the notation \textbf{10x} implies an algorithm with 10x more subcarriers (bandwidth) than the default setting. For example, compared to A-SFADMM using $4096$ subcarriers, A-SFADMM-10x utilizes $40960$ subcarriers at each iteration. Accordingly, given the MLP model size $d=109184$, A-SFADMM-10x requires $\lceil{109184}/{40960}\rceil=3$ time slots for uploading all workers' models, which is $9$x less than A-SFADMM requiring $\lceil{109184}/{4096}\rceil=27$ time slots.

\begin{figure*}[t]\label{eval}
\centering
\includegraphics[width=\textwidth]{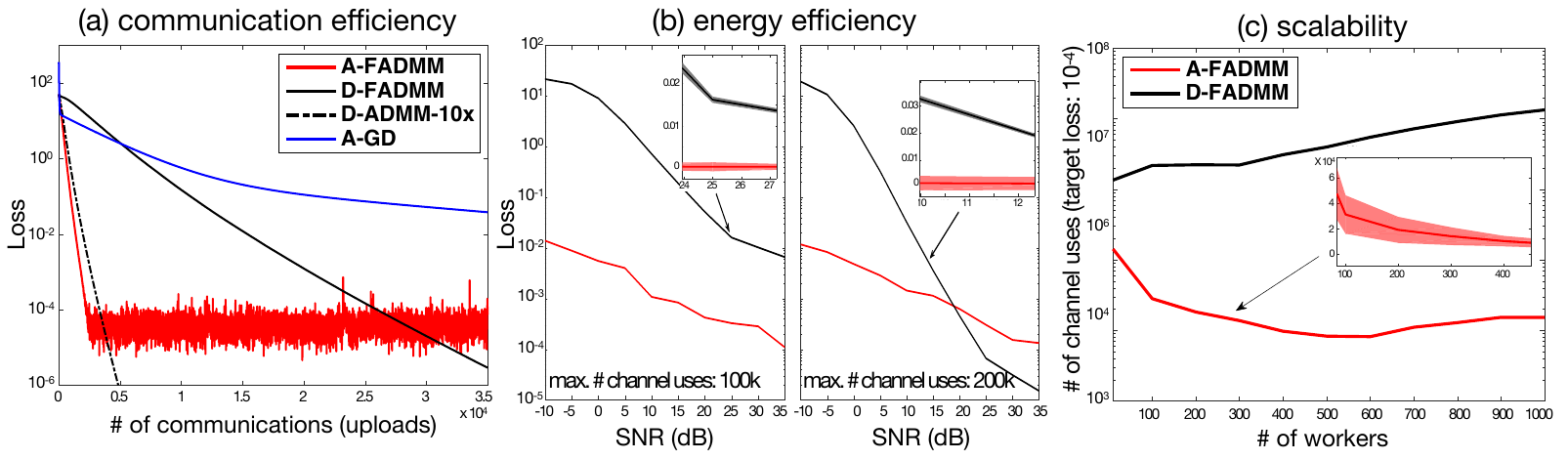}
\caption{\emph{Linear regression} results showing: (a) communication efficiency (loss w.r.t. \# of uploads); (b) energy efficiency (loss w.r.t. SNR); and (c) scalability (\# of channel uses w.r.t \# of workers).}
\label{Fig_LR} 
\end{figure*}

\begin{figure*}[t]
    \centering
    \includegraphics[width=\textwidth]{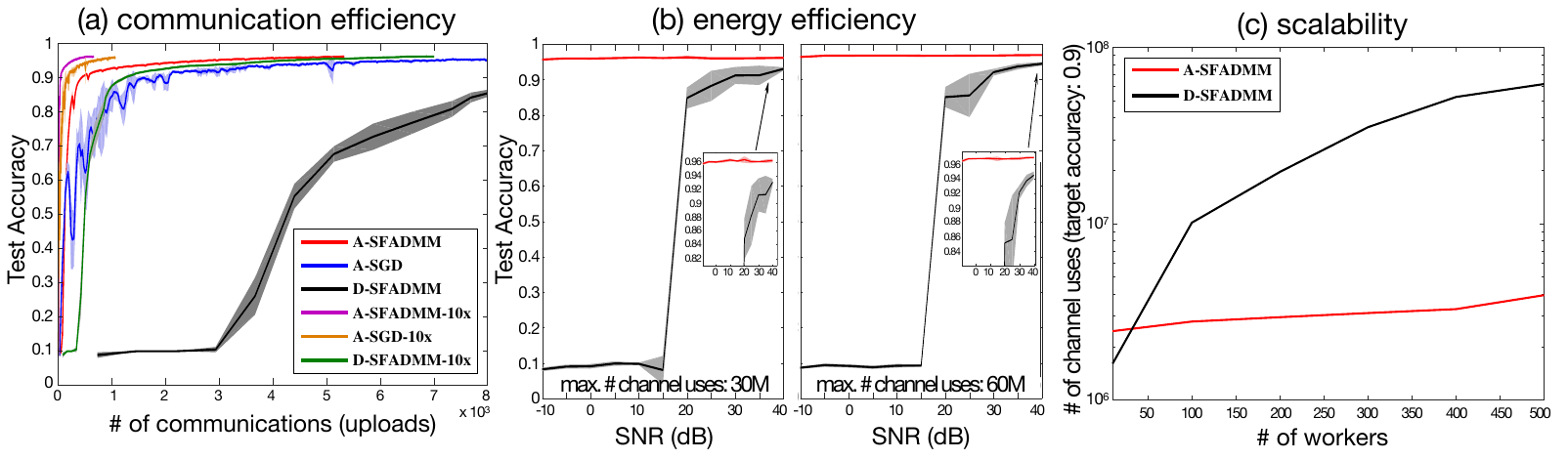}
    \caption{\emph{Image classification} results showing: (a) communication efficiency (test accuracy w.r.t. \#~of uploads); (b)~energy efficiency (test accuracy w.r.t.~SNR); and (c) scalability (\# of channel uses w.r.t \# of workers).}
    \label{Fig_DNN} 
    \end{figure*}

\subsection{Communication Efficiency}
In linear regression, as observed in Fig.~\ref{Fig_LR}(a), A-FADMM requires the lowest communication rounds until achieving a target loss $10^{-4}$. Even with $10$x more subcarriers, D-FADMM fails to reach the same speed due to the orthogonal subcarrier allocation to each worker under limited bandwidth. However, if one aims to achieve very low loss below $10^{-4}$, A-FADMM suffers from noisy reception, and D-FADMM may thus be a better choice, as long as very large bandwidth and/or long uploading time are available. In image classification, Fig.\ref{Fig_DNN}(a) shows that A-SFADMM achieves the highest accuracy the minimum number of communication rounds. In fact, it is even more communication-efficient than D-SFADMM with $10$x more subcarriers (D-SFADMM-10x).

For both tasks, analog FL (i.e., A-GD and A-SGD) struggles with intermittent uploads due to the truncated channel inversion (transmitting only when {\sm$|h_{n,i}|\geq \varepsilon$\nm}). This yields too many communication rounds in linear regression (A-GD) and high variance in image classification (A-SGD), highlighting the importance of non-channel inversion methods used in A-FADMM and A-SFADMM.

\subsection{Energy Efficiency vs. Accuracy}
In this experiment, we assume that there are sufficient subcarriers to upload every update in one time slot, and focus on wireless communication energy consumption that often exceeds computing energy~\cite{Elbamby:19}. We measure the loss or accuracy when the total channel uses $\sum_{i=1}^j M_i$ at time slot $j$ reaches a target maximum number of channel uses, where $M_i$ is the number of subcarriers used in time slot $i$.

With linear regression task and $100$k maximum number of channel uses, Fig.~\ref{Fig_LR}(b) shows that A-FADMM always achieves order-of-magnitude lower loss than D-FADMM, even at very low $-10$dB SNR, i.e., low transmit power. With $200$k channel uses, D-FADMM outperforms A-FADMM, but only at high SNR exceeding $20$dB. This advocates that A-FADMM is more energy-efficient and bandwidth-efficient.

In image classification, as shown by Fig.~\ref{Fig_DNN}(b), A-SFADMM not only outperforms D-SFADMM, but also achieves the maximum test accuracy even when the SNR is as low as $-10$dB and the maximum number of channel uses is $30$M. By contrast, D-SFADMM with $40$dB SNR and $60$M channel uses achieves maximum accuracy that is still lower than A-SFADMM's.

\subsection{Scalability}
We investigate the scalability of A-FADMM and A-SFADMM, by counting the number of channel uses until reaching a target loss or accuracy. We vary the number of contributing workers, and we assume that the noise power spectral density is fixed as $10^{-9}$W/Hz. In linear regression, we clearly see from Fig.\ref{Fig_LR}(c) that A-FADMM does not require more channel uses for more workers to achieve a target loss $10^{-4}$. By contrast, D-FADMM necessitates the channel uses linearly proportional to the number of workers due to the orthogonal bandwidth allocation to every worker. It is worth mentioning that even with only $N=10$ workers, A-FADMM requires order of magnitude less channel uses than D-FADMM. Similar trends are observed in Fig.\ref{Fig_DNN}(b) for image classification, only except for the cases below $N=10$ workers.

\subsection{Sensitivity Analysis}
 \label{Sec:sensAnalysis}
 
In this subsection, we study the impact of hyperparameters on the convergence speed and accuracy of A-FADMM and D-FADMM as well as their stochastic versions. All the training, communication, and simulation environments are identical to the settings in Sec.~\ref{Sec:Exp}, except for the hyperparameters: disagreement penalty weight $\rho$, learning rate, and the number of local iterations as elaborated next.

\begin{figure}[h]
    \centering 
    \subfigure[Linear regression.]{\includegraphics[width=.495\textwidth]{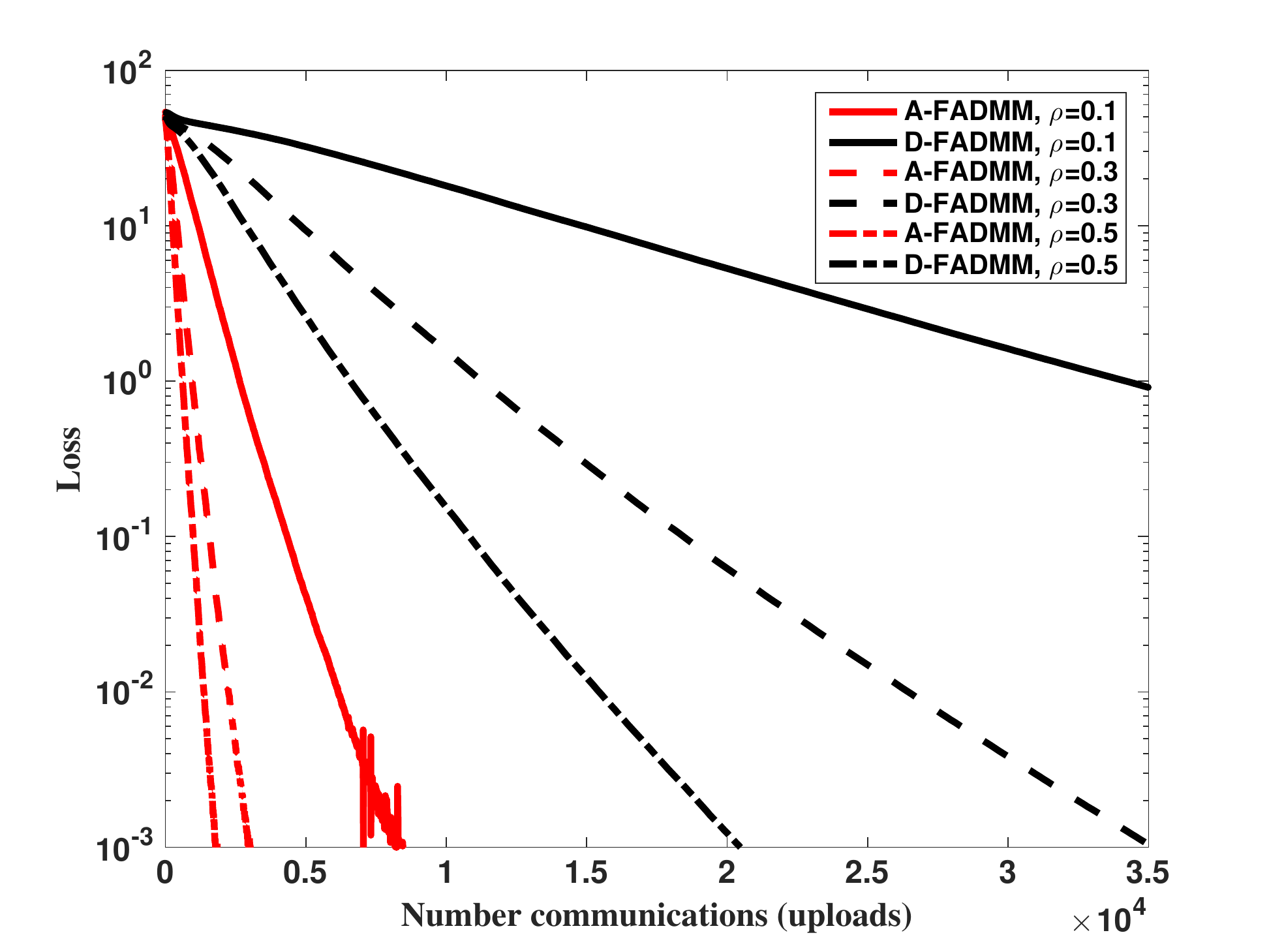}} 
    \subfigure[Image classification.]{\includegraphics[width=.495\textwidth]{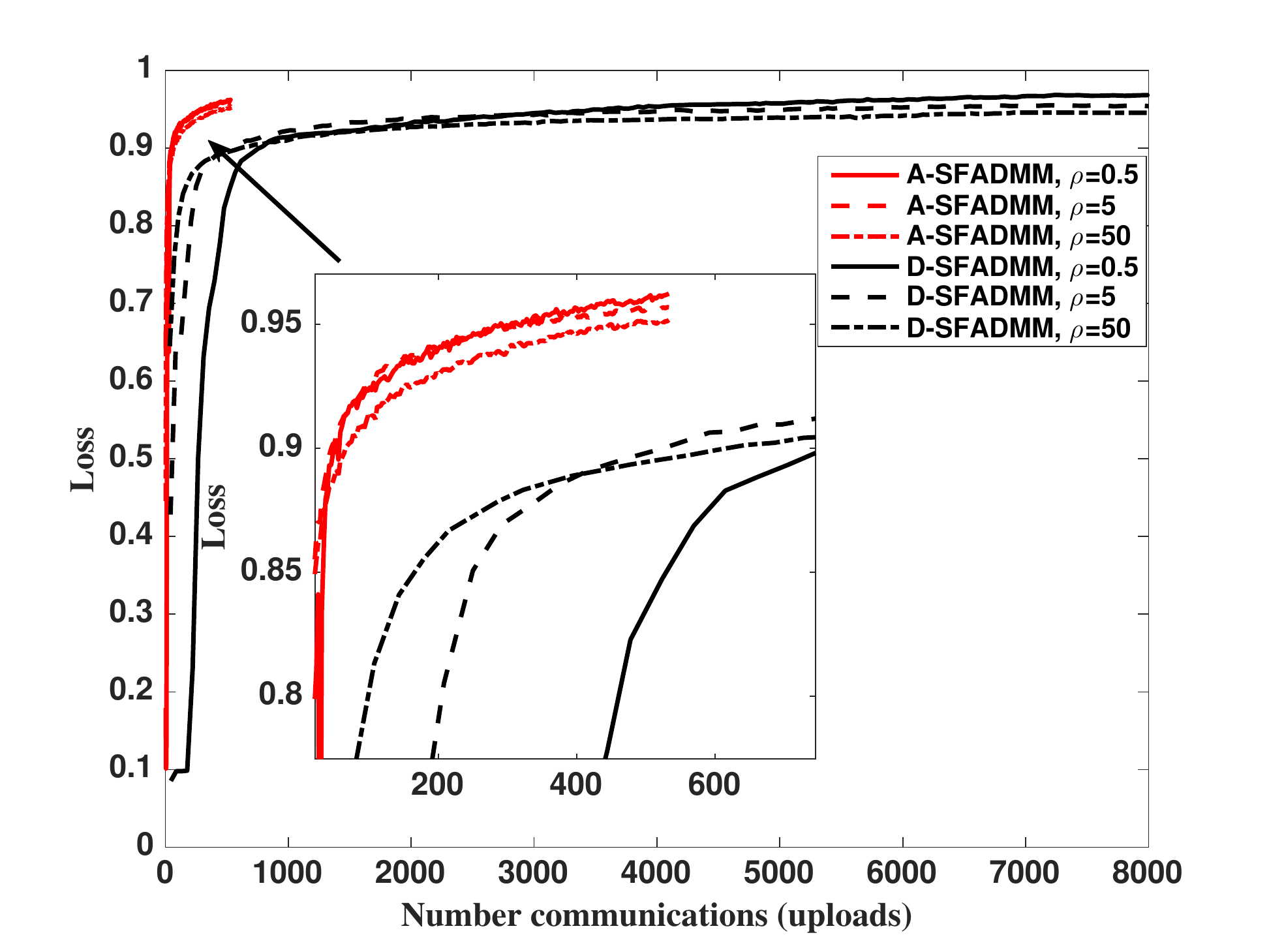}}
    \caption{Impact of the \emph{disagreement penalty weight $\rho$} in (a) linear regression and (b) image classification using DNNs.}
    \label{rhoFig}
    \end{figure}

 
 \paragraph{Impact of $\rho$} The penalty weight $\rho$ adjusts the degree of disagreement between local and global models in both linear regression and classification tasks. In linear regression, Fig.~\ref{rhoFig}(a) shows that a larger $\rho$ leads to faster convergence with diminishing returns for both A-FADMM and D-FADMM. Our choice $\rho=0.5$ in Sec.~\ref{Sec:Exp} is thus a value yielding sufficiently fast convergence.
 
 In image classification, on the other hand, Fig.~\ref{rhoFig}(b) shows that a smaller $\rho$ is slower at the beginning, but reaches the highest test accuracy faster. For small~$\rho$, the penalty of disagreeing with other workers is not large. Therefore, every worker is likely to be biased towards its local optima. Since each worker has only a fraction of the global dataset, the convergence speed is fast, but the accuracy cannot outperform the global model averaged across all workers. For large~$\rho$, workers tend to strictly reduce the local model disagreement from the beginning. This yields a faster jump to a high accuracy level at the early phase. However, keeping large $\rho$ slows down the updating step by pushing all workers towards minimizing the disagreement in their model updates at every iteration. Given these observations, our choice $\rho=0.5$ in Sec.~\ref{Sec:Exp} is a value yielding sufficiently fast convergence to the highest accuracy. To obviate the accuracy reduction at the beginning while keeping fast convergence speed, studying time-varying $\rho$ (e.g., decreasing $\rho$ from a large value with the number of iterations) could be an interesting topic for future study.


\paragraph{Impact of the Number of Local Iterations}
Our image classification relies on DNNs, and thus cannot be solved in a closed form expression. Instead, at every global iteration $k$, several local iterations are performed, updating each local model. Ideally, each worker needs to iterate until convergence before sharing the model update, which may however consume too much time. Alternatively, following the standard FL settings \cite{Brendan17,pap:jakub16,Google:FL19}, we run the local training algorithm (i.e., Adam for A-SFADMM and D-SFADMM and SGD for A-SGD) for a few iterations before uploading each model. The number of local iterations is critical in ensuring convergence and achieving high accuracy. As shown in Fig.~\ref{localIterFigDnn}, with $5$ local iterations both A-SFADMM and D-SFADMM suffer from low accuracy, while A-FADMM even struggles with oscillation. With $20$ local iterations, we observe that both A-SFADMM and D-SFADMM achieve not only convergence but also the highest accuracy. Optimizing the number of local iterations is intertwined with learning rate, mini-batch size, and communication channels. This interesting-but-challenging problem is deferred to future work.

\begin{figure}[h]
\centering
\includegraphics[width=.5\textwidth]{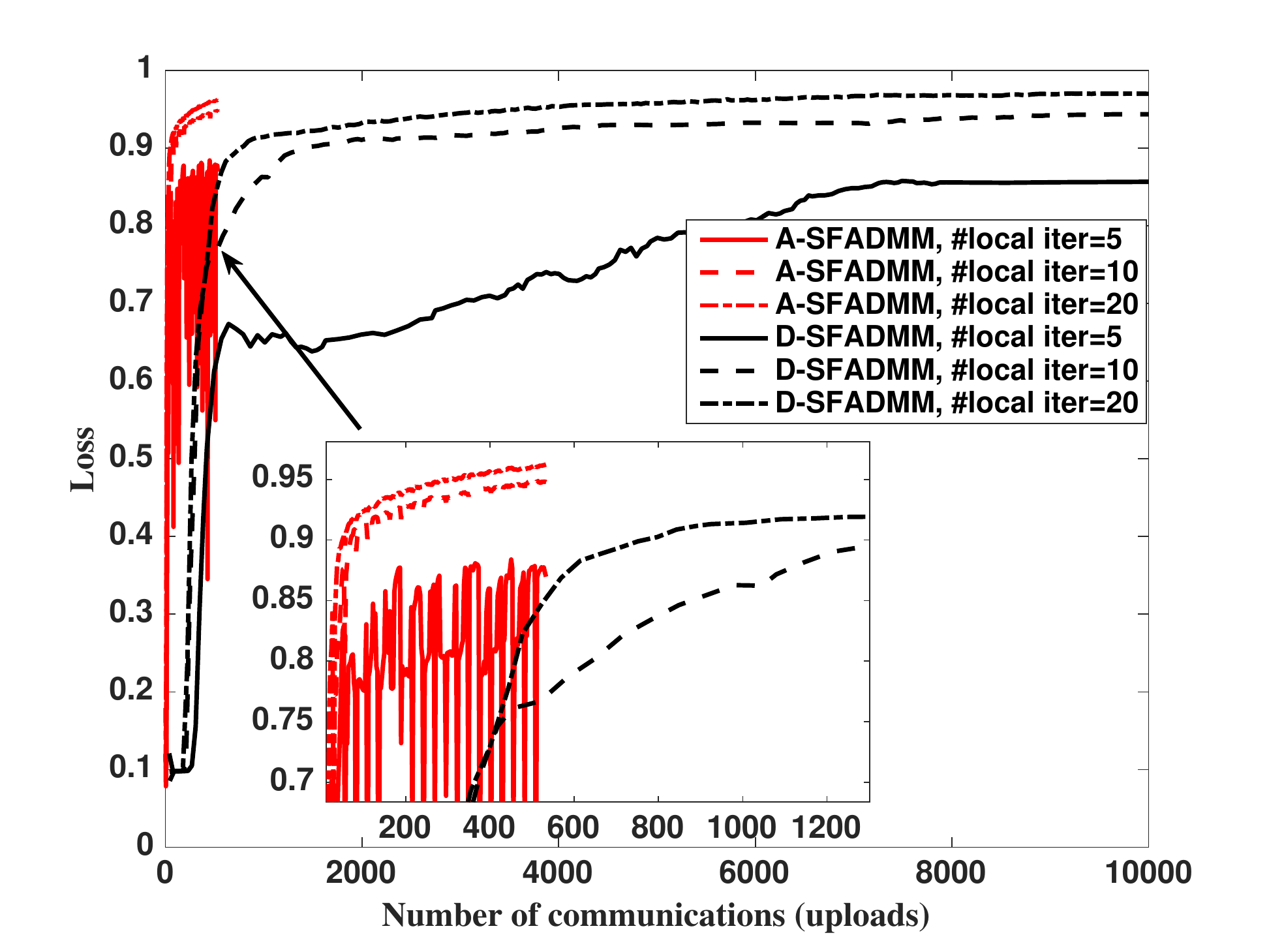}
\caption{Impact of the \emph{number of local iterations} in image classification using DNNs.}
\label{localIterFigDnn} 
\end{figure}

\paragraph{Impact of Learning Rates}
In Sec.~\ref{Sec:Exp}, we use the local optimizer's (Adam or SGD) learning rate $0.01$. Here, we additionally test the learning rate $0.001$ under $\rho=5$. As shown in Fig.~\ref{lrFigDNN}, for A-SFADMM, the learning rate change does not affect the convergence speed and accuracy significantly. By contrast, for D-SFADMM, the learning rate $0.01$ leads to faster convergence, while for A-SGD, the learning rate $0.001$ yields less oscillation.

\begin{figure}[h]
    \centering
    \includegraphics[width=.5\textwidth]{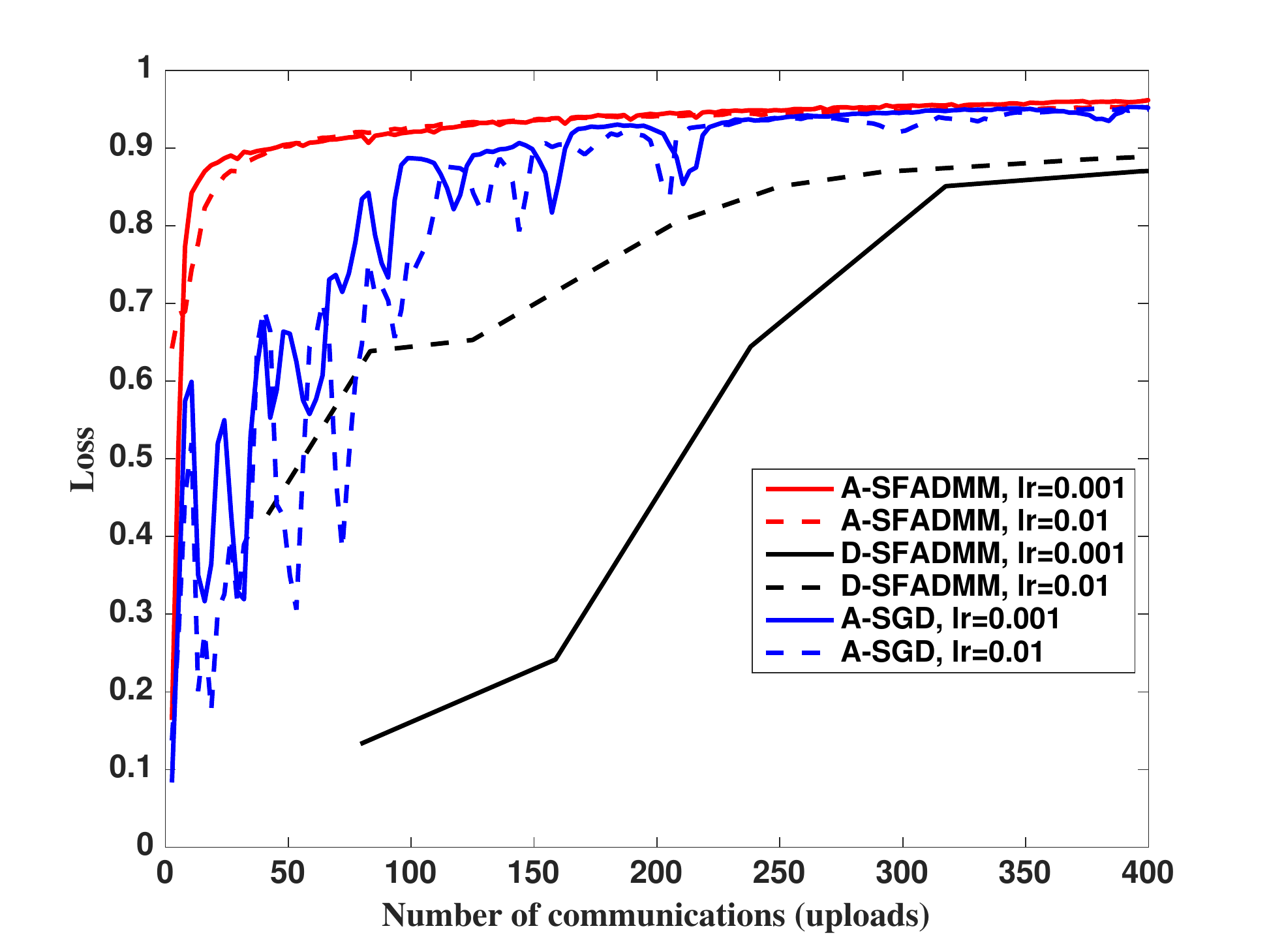}
    \caption{Impact of the \emph{learning rate} in image classification using DNNs.}
    \label{lrFigDNN} 
    \end{figure}
    


\section{Conclusion}\label{SecConc}
In this article, we proposed A-FADMM, and proved its theoretical convergence and privacy guarantees, while validating its effectiveness in convex and non-convex problems. To further improve the applicability, we conclude this article by addressing several practical issues and possible extensions.
\begin{itemize}
\item \textbf{Asynchronous Transmissions:} 
Analog over-the-air aggregation is sensitive to asynchronous signal transmissions as both early and delayed arrivals incur additional noise at reception. To alleviate this problem, it is possible to turn the less communication rounds of A-FADMM into longer transmission time of each worker, increasing the signal overlapping duration compared to the out-of-synch duration.
\item \textbf{Large Models:}
To convey large models using analog signals, model compression methods should be re-designed. Applying compressive sensing techniques is promising, in which a sparsified update is encoded by multiplying a random matrix before transmission \cite{Amiri:2019aa}, and the received update is decoded using the approximate message passing (AMP) algorithm \cite{Donoho18914}.
\item \textbf{Decentralized Architecture:}
Workers have limited transmit energy, and hence faraway workers are difficult to reach PS~\cite{elgabli2019gadmm}\cite{benissaid2020}, hindering the wide-area coverage of A-FADMM. It could be therefore interesting to study the decentralized version of A-FADMM in which every worker communicates only with neighbors while taking into account their time-varying network topologies.
\end{itemize}



\section{Appendices}
\if0
\subsection{D-FADMM: Federated ADMM via Digital Communication}\label{Appendix:Digital}

In this work, we proposed A-FADMM solving (\textbf{P2}) based on analog transmission. For comparison, we consider its digital transmission counterpart, D-FADMM, as elaborated next. D-FADMM solves ({\bf P1}) using the standard ADMM based techniques \cite{glowinski1975approximation,boyd2011distributed,deng2017parallel}. Concretely, the augmented Lagrangian of ({\bf P1}) is written as
\begin{align}
\nonumber &\boldsymbol{\mathcal{L}_\rho}(\boldsymbol{\Theta},\{\boldsymbol{\theta}_n\}_{n=1}^N,\{\boldsymbol{\lambda}_n\}_{n=1}^N )\\
&=\sum_{n=1}^N f_n(\boldsymbol{\theta}_n) + \sum_{n=1}^N \ip{\bm{\lambda}_{n}, \boldsymbol{\theta}_n - \boldsymbol{\Theta}}+\frac{\rho}{2} \sum_{n=1}^N \parallel \boldsymbol{\theta}_n - \boldsymbol{\Theta}\parallel_2^2,
\label{augmentedLag4}
\end{align}
where $\rho>0$~is a constant penalty for the disagreement between $\bbtheta_n$~and $\boldsymbol{\Theta}$. At iteration $k+1$, each worker updates its primal variable by solving the following problem
\small
\begin{align}
{\boldsymbol{\theta}}_{n}^{k+1} =\arg\min_{\bbtheta_n}\Big\{f_n(\boldsymbol{\theta}_n) +&\ip{\boldsymbol{\lambda}_n^k, \boldsymbol{\theta}_n - \boldsymbol{\Theta}^{k}}+\frac{\rho}{2}\parallel \boldsymbol{\theta}_n - \boldsymbol{\Theta}^k\parallel_2^2\Big\}.
\label{headUpdate}
\end{align}
\normalsize
Based on all workers' primal variable updates $\{\boldsymbol{\theta}_n^{k+1}\}_{n=1}^N$ and previous dual variables $\{\boldsymbol{\lambda}_n^{k}\}_{n=1}^N$, PS updates the global model $\boldsymbol{\Theta}^{k+1}$ as follows
\begin{align}
\boldsymbol{\Theta}^{k+1}=\frac{1}{N}\sum_{n=1}^N ({\boldsymbol{\theta}}_{n}^{k+1}+\frac{1}{\rho}\bm{\lambda}_n^k).
\end{align}
Finally, given the updated global model $\boldsymbol{\Theta}^{k+1}$, each worker updates the dual variable $\bm{\lambda}_n^{k+1}$ as follows
\begin{align}
\bm{\lambda}_n^{k+1}=\bm{\lambda}_n^{k}+\rho(\boldsymbol{\theta}_n^{k+1} - \boldsymbol{\Theta}^{k+1}).
\label{dualUpdate1}
\end{align}
To implement the said D-FADMM operations, each worker uploads $\boldsymbol{\theta}_n^{k+1} + \boldsymbol{\lambda}_n^k/\rho$ to PS, and then downloads $\bm{\Theta}^{k+1}$ from PS, followed by locally updating the dual variable $\bm{\lambda}^{k+1}$. 

Under digital transmission, the entire bandwidth is equally and orthogonally allocated to each worker, while each update uploading or downloading corresponds to exchanging a fixed number of bits, e.g., 32 bits per model's element in our experiments in Sec.~\ref{Sec:Exp}. The transmission and reception operations under channel noise are detailed in Appendix~\ref{Appendix:Noise}. Specific parameters of the communication environments are detailed in Appendix~\ref{Sec:LocalTrain}.

\fi

\subsection{Proof of Lemma \ref{lemma:first}}
\label{sec:lem1}

To prove the statement of the lemma, we will proceed by proving the following two statements

$(i)$ The upper bound on the optimality gap is given as
\begin{align}
\nonumber &\sum_{n=1}^N \left[f_n(\boldsymbol{\theta}_n^{k+1}) -f_n(\boldsymbol{\theta}_n^\star)\right] \\
&\leq - \sum_{n=1}^N \sum_{i=1}^d \bm{\mu}_{n,i}^{k+1} \boldsymbol{r}_{n,i}^{k+1} + \sum_{n=1}^N \sum_{i=1}^d \mathbf{S}_{n,i}^{k+1}(\boldsymbol{\theta}_{n,i}^\star-\boldsymbol{\theta}_{n,i}^{k+1}),
\end{align}
where $\bm{\mu}_{n,i}={\lamb _{n,i}}^* h_{n,i}$.

$(ii)$ The lower bound on the optimality gap is given as	
\begin{align}
&\sum_{n=1}^N [f_n(\boldsymbol{\theta}_{n}^{k+1})-f_n(\boldsymbol{\theta}^\star)] \geq -\sum_{n=1}^N \sum_{i=1}^d \bm{\mu}_{n,i}^\star \bbr_{n,i}^{k+1}.
\end{align}

\emph{Proof of statement (i):}
We note that $f_n(\boldsymbol{\theta}_n)$ for all $n$ is closed, proper, and convex, hence $\boldsymbol{\mathcal{L}}_{\rho}$ is sub-differentiable. Since $\boldsymbol{\theta}_{n}^{k+1}$ minimizes $\boldsymbol{\mathcal{L}}_{\rho}( \boldsymbol{\theta}_{n},\boldsymbol{\Theta}^{k}, \lamb_{n}^k)$, the following must hold true at each iteration $k+1$
\begin{align}
\boldsymbol{0} \in \partial_i f_n(\boldsymbol{\theta}_n^{k+1}) + \bm{\mu}_{n,i}^{k} + \rho |h_{n,i}^{k+1}|^2 \boldsymbol{\theta}_{n,i}^{k+1} -\rho |h_{n,i}^{k+1}|^2 \boldsymbol{\Theta}_i^k.
\label{eqToUpdateLambdaandTheta}
\end{align}
Note that when $h_{n,i}^{k+1} \neq h_{n,i}^k$, we choose $\boldsymbol{\theta}_{n,i}^{k+1}=\boldsymbol{\theta}_{n,i}^{k}$, and under this choice, $\boldsymbol{\theta}_{n}^{k+1}$ is still the minimizer of $\boldsymbol{\mathcal{L}}_{\rho}( \boldsymbol{\theta}_{n},\boldsymbol{\Theta}^{k}, \lamb_{n}^k)$ since $\lamb_{n,i}^k$ should have been calculated to satisfy \eqref{eqToUpdateLambdaandTheta} given $\boldsymbol{\theta}_{n,i}^{k}$ when there is change in the channel.

Adding and subtracting the term $\rho |h_{n,i}^{k+1}|^2 \boldsymbol{\Theta}_i^{k+1}$ and re-arranging the terms, we can write
\begin{align}
\nonumber \boldsymbol{0} &\in \partial_i f_n(\boldsymbol{\theta}_n^{k+1}) + \bm{\mu}_{n,i}^{k} +\rho |h_{n,i}^{k+1}|^2 \left(\boldsymbol{\theta}_{n,i}^{k+1} -\boldsymbol{\Theta}_i^{k+1}\right)\\
& + \rho |h_{n,i}^{k+1}|^2 \left(\boldsymbol{\Theta}_i^{k+1}-\boldsymbol{\Theta}_i^k\right).
\end{align}
Using the definitions of $\bm{r}_{n,i}^{k+1}$ and $\mathbf{S}_{n,i}^{k+1}$ as well as the update of $\bm{\mu}_{n,i}^{k+1}$ given in Eq. \eqref{Eq:mu}, we obtain
\begin{align}\label{first}
\boldsymbol{0} \in \partial_i f_n(\boldsymbol{\theta}_n^{k+1}) + \bm{\mu}_{n,i}^{k+1} + \mathbf{S}_{n,i}^{k+1}.
\end{align}
The result in \eqref{first} implies that $\boldsymbol{\theta}_n^{k+1}$ minimizes the following convex objective function 
\begin{align}\label{second}
f_n(\boldsymbol{\theta}_n) + \sum_{i=1}^d  \bm{\mu}_{n,i}^{k+1} \boldsymbol{\theta}_{n,i} + \sum_{i=1}^d  \mathbf{S}_{n,i}^{k+1} \boldsymbol{\theta}_{n,i}.
\end{align}
Next, since $\boldsymbol{\theta}_n^{k+1}$ is the minimizer of \eqref{second}, then, it holds that
\begin{align}\label{third}
\nonumber &f_n(\boldsymbol{\theta}_n^{k+1}) + \sum_{i=1}^d  \bm{\mu}_{n,i}^{k+1} \boldsymbol{\theta}_{n,i}^{k+1}
+  \sum_{i=1}^d  \mathbf{S}_{n,i}^{k+1} \boldsymbol{\theta}_{n,i}^{k+1}\\
&\leq f_n(\boldsymbol{\theta}_n^\star) +  \sum_{i=1}^d  \bm{\mu}_{n,i}^{k+1} \boldsymbol{\theta}_{n,i}^\star
+  \sum_{i=1}^d  \mathbf{S}_{n,i}^{k+1} \boldsymbol{\theta}_{n,i}^\star,
\end{align} 
where $\boldsymbol{\theta}^\star$ is the optimal value of the problem in \eqref{com_agadmm}-\eqref{com_agadmm_c1}. 
Summing over all workers yields
\begin{align}\label{third2}
&\!\sum_{n=1}^N f_n(\boldsymbol{\theta}_n^{k+1}) + \sum_{n=1}^N \sum_{i=1}^d  \bm{\mu}_{n,i}^{k+1} \boldsymbol{\theta}_{n,i}^{k+1}
+ \sum_{n=1}^N \sum_{i=1}^d \mathbf{S}_{n,i}^{k+1}\boldsymbol{\theta}_{n,i}^{k+1}
\nonumber\\&
\leq \sum_{n=1}^N f_n(\boldsymbol{\theta}_n^\star) + \sum_{n=1}^N \sum_{i=1}^d  \bm{\mu}_{n,i}^{k+1} \boldsymbol{\theta}_{n,i}^\star
+ \sum_{n=1}^N \sum_{i=1}^d  \mathbf{S}_{n,i}^{k+1}\boldsymbol{\theta}_{n,i}^\star
\end{align} 
Similarly, $\boldsymbol{\Theta}_{i}^{k+1}$ satisfies 
\begin{align}
0 = -\sum_{n=1}^N \bm{\mu}_{n,i}^{k} + \rho \sum_{n=1}^N |h_{n,i}^{k+1}|^2\boldsymbol{\Theta}_{i}^{k+1} -\rho \sum_{n=1}^N |h_{n,i}^{k+1}|^2 \boldsymbol{\theta}_{n,i}^{k+1}.
\end{align}
Using the update of $\bm{\mu}_{n,i}^{k+1}$, we deduce that $\boldsymbol{\Theta}_{i}^{k+1}$ minimizes $ -\sum\limits_{n=1}^N \bm{\mu}_{n,i}^{k+1} \boldsymbol{\Theta}_{i}$, and therefore, we can write
\begin{align}
-\sum_{n=1}^N \bm{\mu}_{n,i}^{k+1} \boldsymbol{\Theta}_{i}^{k+1} \leq  -\sum_{n=1}^N \bm{\mu}_{n,i}^{k+1} \boldsymbol{\Theta}_{i}^\star.
\end{align}
Summing over all $i$ yields
\begin{align}\label{fourth}
-\sum_{n=1}^N \sum_{i=1}^d \bm{\mu}_{n,i}^{k+1} \boldsymbol{\Theta}_{i}^{k+1} \leq  -\sum_{n=1}^N \sum_{i=1}^d \bm{\mu}_{n,i}^{k+1} \boldsymbol{\Theta}_{i}^\star.
\end{align}
Adding \eqref{third2} and \eqref{fourth}, we get
\begin{align}
\nonumber &\sum_{n=1}^N f_n(\boldsymbol{\theta}_n^{k+1}) + \sum_{n=1}^N \sum_{i=1}^d  \bm{\mu}_{n,i}^{k+1} \boldsymbol{\theta}_{n,i}^{k+1} + \sum_{n=1}^N \sum_{i=1}^d  \mathbf{S}_{n,i}^{k+1}\boldsymbol{\theta}_{n,i}^{k+1} \\
\nonumber & - \sum_{n=1}^N \sum_{i=1}^d \bm{\mu}_{n,i}^{k+1}\boldsymbol{\Theta}_{i}^{k+1}
\\ \nonumber &\leq \sum_{n=1}^N f_n(\boldsymbol{\theta}_n^\star) + \sum_{n=1}^N \sum_{i=1}^d \bm{\mu}_{n,i}^{k+1} \boldsymbol{\theta}_{n,i}^\star + \sum_{n=1}^N \sum_{i=1}^d  \mathbf{S}_{n,i}^{k+1}\boldsymbol{\theta}_{n,i}^\star\\
& - \sum_{n=1}^N \sum_{i=1}^d \bm{\mu}_{n,i}^{k+1} \boldsymbol{\Theta}_{i}^\star.
\end{align}
After rearranging the terms, we get
\begin{align}
\nonumber &\sum_{n=1}^N \left[f_n(\boldsymbol{\theta}_n^{k+1}) -f_n(\boldsymbol{\theta}_n^\star)\right] 
\\ \nonumber &\leq - \sum_{n=1}^N \sum_{i=1}^d \bm{\mu}_{n,i}^{k+1} (\boldsymbol{\theta}_{n,i}^{k+1} - \boldsymbol{\Theta}_{i}^{k+1}) + \sum_{n=1}^N \sum_{i=1}^d \bm{\mu}_{n,i}^{k+1}(\boldsymbol{\theta}_{n,i}^\star - \boldsymbol{\Theta}_{i}^\star) \\
&+ \sum_{n=1}^N \sum_{i=1}^d \mathbf{S}_{n,i}^{k+1}(\boldsymbol{\theta}_{n,i}^\star -\boldsymbol{\theta}_{n,i}^{k+1}).
\end{align}
Using $\boldsymbol{r}_{n,i}^{k+1}=\boldsymbol{\theta}_{n,i}^{k+1}-\boldsymbol{\Theta}_{i}^{k+1}$, and $\boldsymbol{r}_{n,i}^\star=\boldsymbol{\theta}_{n,i}^\star-\boldsymbol{\Theta}_{i}^\star=0$ gives
\begin{align}
\nonumber &\sum_{n=1}^N \left[f_n(\boldsymbol{\theta}_n^{k+1}) -f_n(\boldsymbol{\theta}_n^\star)\right]\\
& \leq - \sum_{n=1}^N \sum_{i=1}^d \bm{\mu}_{n,i}^{k+1} \boldsymbol{r}_{n,i}^{k+1} + \sum_{n=1}^N \sum_{i=1}^d \mathbf{S}_{n,i}^{k+1}(\boldsymbol{\theta}_{n,i}^\star-\boldsymbol{\theta}_{n,i}^{k+1}).
\label{eql1}
\end{align}
and hence we have proved the statement (i).

\emph{Proof of statement (ii):}\\
	We note that for a saddle point  $(\boldsymbol{\Theta}^\star,\boldsymbol{\theta}^\star,\{\lamb_{n}^\star\}_{n})$ of   $\boldsymbol{\mathcal{L}}_{0}(\boldsymbol{\Theta}^\star,\{\boldsymbol{\theta}_n\}_{n}, \{\lamb_{n}\}_{n})$, it holds that, for all $n$, we have
		\begin{align}\label{temp}
	\boldsymbol{\mathcal{L}}_{0}( \boldsymbol{\Theta}^\star,\boldsymbol{\theta}^\star, \{\lamb_{n}^\star\}_{n} ) \leq \boldsymbol{\mathcal{L}}_{0}( \boldsymbol{\Theta}^{k+1},\boldsymbol{\{\theta}_n^{k+1}\}_{n}, \{\lamb_{n}^\star\}_{n}).
	\end{align}
Substituting the expression for the Lagrangian from \eqref{augmentedLagAG} on the both sides of \eqref{temp}, we get
\begin{align}	
\nonumber &\sum_{n=1}^N f_n(\boldsymbol{\theta}^\star)+\sum_{n=1}^N \sum_{i=1}^d \bm{\mu}_{n,i}^\star (\boldsymbol{\theta}_{n,i}^\star-\boldsymbol{\Theta}_i^\star) \\ & \leq \sum_{n=1}^N f_n(\boldsymbol{\theta}_n^{k+1})+\sum_{n=1}^N \sum_{i=1}^d \bm{\mu}_{n,i}^\star (\boldsymbol{\theta}_{n,i}^{k+1}- \boldsymbol{\Theta}_i^{k+1}).
\end{align}
Using $\bbr_{n,i}^{k+1}=\boldsymbol{\theta}_{n,i}^{k+1}- \boldsymbol{\Theta}_i^{k+1}$, and $\bbr_{n,i}^\star=\boldsymbol{\theta}_{n,i}^\star- \boldsymbol{\Theta}_i^\star=0$ gives
\begin{align}
\sum_{n=1}^N \left[f_n(\boldsymbol{\theta}_n^{k+1})- f_n(\boldsymbol{\theta}^\star)\right] \geq -\sum_{n=1}^N \sum_{i=1}^d \bm{\mu}_{n,i}^\star \bbr_{n,i}^{k+1}.
\label{eq3a}
\end{align}
which proves the statement $(ii)$. 

Finally, combining the statements $(i)$ and $(ii)$ completes the proof. 

\subsection{Proof of Theorem \ref{theorem}}\label{sec:them1}
The proof relies on using the lower and upper bounds derived in Lemma \ref{lemma:first} to show the decrease in the optimality gap. To this end, we start by multiplying both Eqs. (\ref{eql1}) and (\ref{eq3a}) by 2, and then add them up to get
\small
\begin{align}
2 \sum_{n=1}^N \sum_{i=1}^d \left( \bm{\mu}_{n,i}^{k+1} -  \bm{\mu}_{n,i}^\star \right) \bbr_{n,i}^{k+1} + 2 \sum_{n=1}^N \sum_{i=1}^d \boldsymbol{S}_{n,i}^{k+1} \left(\boldsymbol{\theta}_{n,i}^{k+1} - \boldsymbol{\theta}_{m,i}^\star\right) \leq 0
\label{eqthm1}
\end{align}
\normalsize
Since $\bm{\mu}_{n,i}^{k+1} = \bm{\mu}_{n,i}^{k} + \rho |h_{n,i}^{k+1}|^2 \bbr_{n,i}^{k+1}$, then the first term can be re-written as
\small
\begin{align}
& \nonumber 2 \sum_{n=1}^N \sum_{i=1}^d \left( \bm{\mu}_{n,i}^{k+1} -  \bm{\mu}_{n,i}^\star \right) \bbr_{n,i}^{k+1} \\
&= 2~ \sum_{n=1}^N \sum_{i=1}^d \left( \bm{\mu}_{n,i}^{k} -  \bm{\mu}_{n,i}^\star \right) \bbr_{n,i}^{k+1} + 2 \rho \sum_{n=1}^N \sum_{i=1}^d |h_{n,i}^{k+1}|^2 \left(\bbr_{n,i}^{k+1}\right)^2.
\label{eqthm2}
\end{align}
\normalsize
Since $\bbr_{n,i}^{k+1} = \frac{1}{\rho |h_{n,i}^{k+1}|^2}\left(\bm{\mu}_{n,i}^{k+1} -\bm{\mu}_{n,i}^{k}\right)$, we can write
\begin{align}
& \nonumber 2 \sum_{n=1}^N \sum_{i=1}^d \left( \bm{\mu}_{n,i}^{k+1} -  \bm{\mu}_{n,i}^\star \right) \bbr_{n,i}^{k+1} \\
\nonumber & = \frac{2}{\rho} \sum_{n=1}^N \sum_{i=1}^d \frac{1}{|h_{n,i}^{k+1}|^2} \left( \bm{\mu}_{n,i}^{k} -  \bm{\mu}_{n,i}^\star \right) \left( \bm{\mu}_{n,i}^{k+1} -  \bm{\mu}_{n,i}^{k}\right) \\
&+ 2 \rho \sum_{n=1}^N \sum_{i=1}^d |h_{n,i}^{k+1}|^2 \left(\bbr_{n,i}^{k+1}\right)^2.
\end{align}
Using the fact that $\bm{\mu}_{n,i}^{k+1} - \bm{\mu}_{n,i}^{k} = \bm{\mu}_{n,i}^{k+1} - \bm{\mu}_{n,i}^\star + \bm{\mu}_{n,i}^\star - \bm{\mu}_{n,i}^{k}$, we get
\begin{align}
\nonumber &\frac{1}{|h_{n,i}^{k+1}|^2} \left( \bm{\mu}_{n,i}^{k} -  \bm{\mu}_{n,i}^\star \right) \left( \bm{\mu}_{n,i}^{k+1} -  \bm{\mu}_{n,i}^{k}\right)\\
& = \frac{1}{|h_{n,i}^{k+1}|^2} \left( \bm{\mu}_{n,i}^{k} -  \bm{\mu}_{n,i}^\star \right) \left( \bm{\mu}_{n,i}^{k+1} -  \bm{\mu}_{n,i}^\star\right) - \frac{(\bm{\mu}_{n,i}^{k} - \bm{\mu}_{n,i}^\star)^2}{|h_{n,i}^{k+1}|^2}.
\end{align}
Now, let's re-write $\rho |h_{n,i}^{k+1}|^2 \left(\bbr_{n,i}^{k+1}\right)^2$, using $\bbr_{n,i}^{k+1} = \frac{1}{\rho |h_{n,i}^{k+1}|^2}\left(\bm{\mu}_{n,i}^{k+1} -\bm{\mu}_{n,i}^{k}\right)$, as
\begin{align}
& \nonumber \rho |h_{n,i}^{k+1}|^2 \left(\bbr_{n,i}^{k+1}\right)^2 \\
\nonumber & = \frac{1}{\rho |h_{n,i}^{k+1}|^2}  (\bm{\mu}_{n,i}^{k+1} - \bm{\mu}_{n,i}^{k})^2 \\
\nonumber & = \frac{1}{\rho |h_{n,i}^{k+1}|^2} (\bm{\mu}_{n,i}^{k+1} - \bm{\mu}_{n,i}^\star)^2  + \frac{1}{\rho |h_{n,i}^{k+1}|^2} (\bm{\mu}_{n,i}^{k} - \bm{\mu}_{n,i}^\star)^2\\
&  - \frac{2}{\rho |h_{n,i}^{k+1}|^2} \left(\bm{\mu}_{n,i}^{k} - \bm{\mu}_{n,i}^\star\right) \left(\bm{\mu}_{n,i}^{k+1} - \bm{\mu}_{n,i}^\star\right),  
\end{align}
where we have used that $\bm{\mu}_{n,i}^{k+1} - \bm{\mu}_{n,i}^{k} = \bm{\mu}_{n,i}^{k+1} - \bm{\mu}_{n,i}^\star + \bm{\mu}_{n,i}^\star - \bm{\mu}_{n,i}^{k}$. Going back to Eq. (\ref{eqthm2}), we can write
\small
\begin{align}
& \nonumber 2 \sum_{n=1}^N \sum_{i=1}^d \left( \bm{\mu}_{n,i}^{k+1} -  \bm{\mu}_{n,i}^\star \right) \bbr_{n,i}^{k+1} \\
\nonumber & = \frac{1}{\rho}  \sum_{n=1}^N \sum_{i=1}^d \frac{1}{|h_{n,i}^{k+1}|^2}(\bm{\mu}_{n,i}^{k+1} - \bm{\mu}_{n,i}^\star)^2\\
& - \frac{1}{\rho}  \sum_{n=1}^N \sum_{i=1}^d \frac{1}{|h_{n,i}^{k+1}|^2} (\bm{\mu}_{n,i}^{k} - \bm{\mu}_{n,i}^\star)^2 + \rho  \sum_{n=1}^N \sum_{i=1}^d |h_{n,i}^{k+1}|^2 \left(\bbr_{n,i}^{k+1}\right)^2.
\label{eqthm5}
\end{align}
\normalsize
Now, let's examine the second term of Eq. (\ref{eqthm1})
\begin{align}
\nonumber & 2 \sum_{n=1}^N \sum_{i=1}^d \boldsymbol{S}_{n,i}^{k+1} (\boldsymbol{\theta}_{n,i}^{k+1}-\boldsymbol{\theta}_{n,i}^\star) \\
\nonumber & =  2 \rho \sum_{n=1}^N \sum_{i=1}^d |h_{n,i}^{k+1}|^2 (\boldsymbol{\Theta}_i^{k+1}-\boldsymbol{\Theta}_i^{k}) (\boldsymbol{\theta}_{n,i}^{k+1}-\boldsymbol{\theta}_{n,i}^\star) \\
\nonumber &= 2 \rho \sum_{n=1}^N \sum_{i=1}^d |h_{n,i}^{k+1}|^2 (\boldsymbol{\Theta}_i^{k+1}-\boldsymbol{\Theta}_i^{k}) \bbr_{n,i}^{k+1} \\
&+  2 \rho \sum_{n=1}^N \sum_{i=1}^d  |h_{n,i}^{k+1}|^2 (\boldsymbol{\Theta}_{i}^{k+1}-\boldsymbol{\Theta}_{i}^{k})(\boldsymbol{\Theta}_{i}^{k+1}-\boldsymbol{\theta}_{n,i}^\star).
\end{align}
Using $\boldsymbol{\Theta}_{i}^{k+1} - \boldsymbol{\theta}_{n,i}^\star = \boldsymbol{\Theta}_{i}^{k+1} -  \boldsymbol{\Theta}_{i}^{k} +  \boldsymbol{\Theta}_{i}^{k} - \boldsymbol{\theta}_{n,i}^\star$, we can write
\begin{align}
\nonumber & 2 \sum_{n=1}^N \sum_{i=1}^d \boldsymbol{S}_{n,i}^{k+1} (\boldsymbol{\theta}_{n,i}^{k+1}-\boldsymbol{\theta}_{n,i}^\star) \\
&\nonumber = 2 \rho \sum_{n=1}^N \sum_{i=1}^d |h_{n,i}^{k+1}|^2 (\boldsymbol{\Theta}_i^{k+1}-\boldsymbol{\Theta}_i^{k}) \bbr_{n,i}^{k+1}\\
&\nonumber + 2 \rho \sum_{n=1}^N \sum_{i=1}^d |h_{n,i}^{k+1}|^2  (\boldsymbol{\Theta}_i^{k+1}-\boldsymbol{\Theta}_i^{k})^2\\
&  + 2 \rho \sum_{n=1}^N \sum_{i=1}^d |h_{n,i}^{k+1}|^2  (\boldsymbol{\Theta}_i^{k+1}-\boldsymbol{\Theta}_i^{k}) (\boldsymbol{\Theta}_i^{k}-\boldsymbol{\theta}_{n,i}^\star).
\end{align}
Since $\boldsymbol{\Theta}_i^{k+1} - \boldsymbol{\Theta}_i^{k} = \boldsymbol{\Theta}_i^{k+1} - \boldsymbol{\theta}_{n,i}^\star + \boldsymbol{\theta}_{n,i}^\star - \boldsymbol{\Theta}_i^{k}$, then we get
\begin{align}
\nonumber& 2 \sum_{n=1}^N \sum_{i=1}^d \boldsymbol{S}_{n,i}^{k+1} (\boldsymbol{\theta}_{n,i}^{k+1}-\boldsymbol{\theta}_{n,i}^\star) \\
&\nonumber = 2 \rho \sum_{n=1}^N \sum_{i=1}^d |h_{n,i}^{k+1}|^2 (\boldsymbol{\Theta}_i^{k+1}-\boldsymbol{\Theta}_i^{k}) \bbr_{n,i}^{k+1} \\
&\nonumber + 2 \rho \sum_{n=1}^N \sum_{i=1}^d |h_{n,i}^{k+1}|^2 (\boldsymbol{\Theta}_i^{k+1}-\boldsymbol{\Theta}_i^{k})^2\\
\nonumber & - 2 \rho \sum_{n=1}^N \sum_{i=1}^d |h_{n,i}^{k+1}|^2 (\boldsymbol{\Theta}_i^{k}-\boldsymbol{\theta}_{n,i}^\star)^2 \\
& + 2 \rho \sum_{n=1}^N \sum_{i=1}^d |h_{n,i}^{k+1}|^2 (\boldsymbol{\Theta}_i^{k+1} - \boldsymbol{\theta}_{n,i}^\star)(\boldsymbol{\Theta}_i^{k} - \boldsymbol{\theta}_{n,i}^\star),
\label{eqthm3}
\end{align}
Now, let's focus the second term of Eq. (\ref{eqthm3}). Using $\boldsymbol{\Theta}_i^{k+1} - \boldsymbol{\Theta}_i^{k} = \boldsymbol{\Theta}_i^{k+1} - \boldsymbol{\theta}_{n,i}^\star + \boldsymbol{\theta}_{n,i}^\star - \boldsymbol{\Theta}_i^{k}$, we can write
\begin{align}
\nonumber & \rho \sum_{n=1}^N \sum_{i=1}^d |h_{n,i}^{k+1}|^2 (\boldsymbol{\Theta}_i^{k+1}-\boldsymbol{\Theta}_i^{k})^2\\
\nonumber & = \rho \sum_{n=1}^N \sum_{i=1}^d |h_{n,i}^{k+1}|^2 (\boldsymbol{\Theta}_i^{k+1}-\boldsymbol{\theta}_{n,i}^\star)^2 \\ \nonumber &+ \rho \sum_{n=1}^N \sum_{i=1}^d |h_{n,i}^{k+1}|^2 (\boldsymbol{\Theta}_i^{k}-\boldsymbol{\theta}_{n,i}^\star)^2 \\
&- 2 \rho \sum_{n=1}^N \sum_{i=1}^d |h_{n,i}^{k+1}|^2  (\boldsymbol{\Theta}_i^{k+1}-\boldsymbol{\theta}_{n,i}^\star) (\boldsymbol{\Theta}_i^{k}-\boldsymbol{\theta}_{n,i}^\star).
\end{align}
Replacing the last equation into Eq. (\ref{eqthm3}), we can write
\begin{align}
\nonumber& 2 \sum_{n=1}^N \sum_{i=1}^d \boldsymbol{S}_{n,i}^{k+1} (\boldsymbol{\theta}_{n,i}^{k+1}-\boldsymbol{\theta}_{n,i}^\star)  \\
\nonumber &= 2 \rho \sum_{n=1}^N \sum_{i=1}^d |h_{n,i}^{k+1}|^2 (\boldsymbol{\Theta}_i^{k+1}-\boldsymbol{\Theta}_i^{k}) \bbr_{n,i}^{k+1} \\
\nonumber & +  \rho \sum_{n=1}^N \sum_{i=1}^d |h_{n,i}^{k+1}|^2 (\boldsymbol{\Theta}_i^{k+1}-\boldsymbol{\Theta}_i^{k})^2  \\
\nonumber & +  \rho \sum_{n=1}^N \sum_{i=1}^d |h_{n,i}^{k+1}|^2 (\boldsymbol{\Theta}_i^{k+1}-\boldsymbol{\theta}_{n,i}^\star)^2  \\& -  \rho \sum_{n=1}^N \sum_{i=1}^d |h_{n,i}^{k+1}|^2  (\boldsymbol{\Theta}_i^{k}-\boldsymbol{\theta}_{n,i}^\star)^2.
\label{eqthm4}
\end{align}
Using Eqs. (\ref{eqthm5}) and (\ref{eqthm4}) in (\ref{eqthm1}), we get
\small
\begin{align}\label{eqdiff}
& \nonumber \frac{1}{\rho} \sum_{n=1}^N \sum_{i=1}^d \frac{(\bm{\mu}_{n,i}^{k+1} - \bm{\mu}_{n,i}^\star)^2}{|h_{n,i}^{k+1}|^2} - \frac{1}{\rho} \sum_{n=1}^N \sum_{i=1}^d \frac{(\bm{\mu}_{n,i}^{k} - \bm{\mu}_{n,i}^\star)^2}{|h_{n,i}^{k+1}|^2} \\ & \nonumber + \rho \sum_{n=1}^N \sum_{i=1}^d |h_{n,i}^{k+1}|^2 (\boldsymbol{\Theta}_i^{k+1}-\boldsymbol{\Theta}_i^{k})^2 + \rho \sum_{n=1}^N \sum_{i=1}^d |h_{n,i}^{k+1}|^2 \left(\bbr_{n,i}^{k+1}\right)^2\\
\nonumber & + \rho \sum_{n=1}^N \sum_{i=1}^d |h_{n,i}^{k+1}|^2 (\boldsymbol{\Theta}_i^{k+1}-\boldsymbol{\Theta}_{i}^\star)^2 - \rho \sum_{n=1}^N \sum_{i=1}^d |h_{n,i}^{k+1}|^2 (\boldsymbol{\Theta}_i^{k}-\boldsymbol{\Theta}_{i}^\star)^2  \\
& + 2 \rho \sum_{n=1}^N \sum_{i=1}^d |h_{n,i}^{k+1}|^2 (\boldsymbol{\Theta}_i^{k+1}-\boldsymbol{\Theta}_i^{k}) \bbr_{n,i}^{k+1} \leq 0
\end{align}
\normalsize
Defining the sequence that measures the difference in the optimality gap between iterations $k+1$ and $k$ as
\small
\begin{align}
\nonumber & W^{k+1} \\
\nonumber &= \frac{1}{\rho} \sum_{n=1}^N \sum_{i=1}^d \frac{(\bm{\mu}_{n,i}^{k+1} - \bm{\mu}_{n,i}^\star)^2}{|h_{n,i}^{k+1}|^2}  - \frac{1}{\rho} \sum_{n=1}^N \sum_{i=1}^d \frac{(\bm{\mu}_{n,i}^{k} - \bm{\mu}_{n,i}^\star)^2}{|h_{n,i}^{k+1}|^2} \\
& + \rho \sum_{n=1}^N \sum_{i=1}^d |h_{n,i}^{k+1}|^2 (\boldsymbol{\Theta}_i^{k+1}-\boldsymbol{\Theta}_{i}^\star)^2 - \rho \sum_{n=1}^N \sum_{i=1}^d |h_{n,i}^{k+1}|^2 (\boldsymbol{\Theta}_i^{k}-\boldsymbol{\Theta}_{i}^\star)^2  ,
\end{align}
\normalsize
then, Eq. \eqref{eqdiff} can be re-written as
\begin{align}\label{decrease}
W^{k+1} \leq - \rho \sum_{n=1}^N \sum_{i=1}^d |h_{n,i}^{k+1}|^2 \left(\bbr_{n,i}^{k+1}+ \boldsymbol{\Theta}_i^{k+1}-\boldsymbol{\Theta}_i^{k} \right)^2.
\end{align}
From \eqref{decrease}, we note that $W^{k+1}$ is negative since $\rho$ is a positive constant ($\rho > 0$) and the sum square term is positive. Hence, the optimality gap at iteration $k+1$ is non-increasing, which completes the proof. 
\subsection{Proof of Corollary \ref{corollary}}\label{sec:corr1}
Using the derivations made in Theorem \ref{theorem}, we can further show that, in the case of static channel, i.e. $h_{n,i}^{k+1} = h_{n,i}$, both the  primal and dual residuals converge to zero, i.e. $\underset{k \rightarrow \infty}{\lim} \bbr_{n,i}^{k+1} = 0$ and $\underset{k \rightarrow \infty}{\lim} \bbS_{n,i}^{k+1} = 0$. To this end, we start defining the Lyapunov function
\small
\begin{align}
V^k =\frac{1}{\rho} \sum_{n=1}^N \sum_{i=1}^d \frac{(\bm{\mu}_{n,i}^{k}-\bm{\mu}_{n,i}^\star)^2 }{|h_{n,i}|^2}+ \rho \sum_{n=1}^N \sum_{i=1}^d  |h_{n,i}|^2 (\boldsymbol{\Theta}_i^{k} -  \boldsymbol{\Theta}_i^\star)^2.
\label{lyapEq}
\end{align}
\normalsize
We can re-write \eqref{decrease} as
\small
\begin{align}\label{decrease2}
\nonumber &V^{k+1} - V^k \\
\nonumber & \leq - \rho \sum_{n=1}^N \sum_{i=1}^d |h_{n,i}|^2 (\bbr_{n,i}^{k+1})^2 - \rho \sum_{n=1}^N \sum_{i=1}^d |h_{n,i}|^2 \left(\boldsymbol{\Theta}_{i}^{k+1}-\boldsymbol{\Theta}_i^{k} \right)^2\\
& - 2 \rho \sum_{n=1}^N \sum_{i=1}^d |h_{n,i}|^2 \bbr_{n,i}^{k+1} \left(\boldsymbol{\Theta}_{i}^{k+1}-\boldsymbol{\Theta}_i^{k} \right)
\end{align}
\normalsize
Since $\boldsymbol{\Theta}_i^{k+1}$ minimizes $-\sum_{n=1}^N \bm{\mu}_{n,i}^{k+1} \boldsymbol{\Theta}_{i}$, and $\boldsymbol{\Theta}_i^{k}$ minimizes $-\sum_{n=1}^N \bm{\mu}_{n,i}^{k }\boldsymbol{\Theta}_{i}$, then, after summing over $i$ in both sides of each equation, we can write
\begin{align}
-\sum_{n=1}^N \sum_{i=1}^d  \bm{\mu}_{n,i}^{k+1} \boldsymbol{\Theta}_{i}^{k+1}
\leq  -\sum_{n=1}^N \sum_{i=1}^d  \bm{\mu}_{n,i}^{k+1} \boldsymbol{\Theta}_{i}^{k},
\label{eqthm6}
\end{align}
\begin{align}
-\sum_{n=1}^N \sum_{i=1}^d  \bm{\mu}_{n,i}^{k} \boldsymbol{\Theta}_{i}^{k}
\leq  -\sum_{n=1}^N \sum_{i=1}^d  \bm{\mu}_{n,i}^{k} \boldsymbol{\Theta}_{i}^{k+1}.
\label{eqthm7}
\end{align}
Adding Eqs. (\ref{eqthm6}) and (\ref{eqthm7}), we get
\begin{align}
\sum_{n=1}^N \sum_{i=1}^d ( \bm{\mu}_{n,i}^{k+1}- \bm{\mu}_{n,i}^{k}) (\boldsymbol{\Theta}_{i}^{k+1}-\boldsymbol{\Theta}_{i}^{k}) \geq 0.
\end{align}
Since $ \bm{\mu}_{n,i}^{k+1} =  \bm{\mu}_{n,i}^{k} + \rho |h_{n,i}|^2 \bbr_{n,i}^{k+1}$, then we get
\begin{align}
2 \rho \sum_{n=1}^N \sum_{i=1}^d |h_{n,i}|^2 (\boldsymbol{\Theta}_i^{k+1}-\boldsymbol{\Theta}_i^{k}) \bbr_{n,i}^{k+1} \geq 0.
\end{align}
Thus, using Eq. \eqref{decrease2}, and summing over the iterations from $k=1,\dots,K$, we get
\small
\begin{align}
\nonumber &\sum_{k=0}^K \left[\rho \sum_{n=1}^N \sum_{i=1}^d |h_{n,i}|^2 \left(\bbr_{n,i}^{k+1}\right)^2 + \rho \sum_{n=1}^N \sum_{i=1}^d |h_{n,i}|^2 (\boldsymbol{\Theta}_i^{k+1}-\boldsymbol{\Theta}_i^{k})^2 \right. \\
&\left. + 2\rho \sum_{n=1}^N \sum_{i=1}^d |h_{n,i}|^2 (\boldsymbol{\Theta}_i^{k+1}-\boldsymbol{\Theta}_i^{k}) \bbr_{n,i}^{k+1} \right] \leq V^0.
\end{align}
\normalsize
Taking the limit as $K \rightarrow \infty$, and using the fact that the terms of the serie on the left hand-side are positive, we obtain that the primal and dual residuals goes to zero as $k \rightarrow \infty$, i.e. $\underset{k \rightarrow \infty}{\lim} \bbr_{n,i}^{k+1} = \boldsymbol{0}$ and $\underset{k \rightarrow \infty}{\lim} \bbS_{n,i}^{k+1} = \boldsymbol{0}$.
Using the upper and lower bounds, \eqref{lem1Lower} and \eqref{lem1Upper}, derived in Lemma \ref{lemma:first} and the fact that both the primal and dual residuals goes to zero as $k \rightarrow \infty$, we get that the optimal gap also goes to zero as $k \rightarrow \infty$, i.e., $\underset{k \rightarrow \infty}{\lim} \sum_{n=1}^N f_n(\boldsymbol{\theta}_n^{k}) =  \sum_{n=1}^N f_n(\boldsymbol{\theta}_n^\star)$, finalizing the proof.

\bibliographystyle{IEEEtran}

\bibliography{refsAADMM.bib}

\end{document}